%%%% ijcai25.tex

\typeout{IJCAI--25 Instructions for Authors}

% These are the instructions for authors for IJCAI-25.

\documentclass{article}
\pdfpagewidth=8.5in
\pdfpageheight=11in

% The file ijcai25.sty is a copy from ijcai22.sty
% The file ijcai22.sty is NOT the same as previous years'
\usepackage{ijcai25}

% Use the postscript times font!
\usepackage{times}
\usepackage{soul}
\usepackage{url}
\usepackage[hidelinks]{hyperref}
\usepackage[utf8]{inputenc}
\usepackage[small]{caption}
\usepackage{graphicx}
\usepackage{amsmath}
\usepackage{amsthm}
\usepackage{booktabs}
\usepackage{algorithm}
\usepackage{algorithmic}
\usepackage[switch]{lineno}

% Added by wzliu
\usepackage{wrapfig}
\usepackage{subfigure}
\usepackage{amsmath}
\usepackage{amssymb}
\hypersetup{
    colorlinks=True,
    linkcolor=blue,
    filecolor=blue,      
    urlcolor=black,
    citecolor=black,
}
\newtheorem{definition}{Definition}
\newtheorem{lemma}{Lemma}

% Comment out this line in the camera-ready submission
%\linenumbers

\urlstyle{same}

% the following package is optional:
%\usepackage{latexsym}

% See https://www.overleaf.com/learn/latex/theorems_and_proofs
% for a nice explanation of how to define new theorems, but keep
% in mind that the amsthm package is already included in this
% template and that you must *not* alter the styling.

\newtheorem{theorem}{Theorem}

% Following comment is from ijcai97-submit.tex:
% The preparation of these files was supported by Schlumberger Palo Alto
% Research, AT\&T Bell Laboratories, and Morgan Kaufmann Publishers.
% Shirley Jowell, of Morgan Kaufmann Publishers, and Peter F.
% Patel-Schneider, of AT\&T Bell Laboratories collaborated on their
% preparation.

% These instructions can be modified and used in other conferences as long
% as credit to the authors and supporting agencies is retained, this notice
% is not changed, and further modification or reuse is not restricted.
% Neither Shirley Jowell nor Peter F. Patel-Schneider can be listed as
% contacts for providing assistance without their prior permission.

% To use for other conferences, change references to files and the
% conference appropriate and use other authors, contacts, publishers, and
% organizations.
% Also change the deadline and address for returning papers and the length and
% page charge instructions.
% Put where the files are available in the appropriate places.

% PDF Info Is REQUIRED.

% Please leave this \pdfinfo block untouched both for the submission and
% Camera Ready Copy. Do not include Title and Author information in the pdfinfo section
\pdfinfo{
/TemplateVersion (IJCAI.2025.0)
}

\title{Reducing Action Space for Deep Reinforcement Learning via \\ Causal Effect Estimation}

% Single author syntax
%\author{
%    Author Name
%    \affiliations
%    Affiliation
%    \emails
%    email@example.com
%}

% Multiple author syntax (remove the single-author syntax above and the \iffalse ... \fi here)
%\iffalse
\author{
Wenzhang Liu$^{1,2}$
\and
Lianjun Jin$^1$\and
Lu Ren$^{1,2}$\and
Chaoxu Mu$^{1,2}$\And
Changyin Sun$^{1,2}$\\
\affiliations
$^1$School of Artificial Intelligence, Anhui University\\
$^2$Engineering Research Center of Autonomous Unmanned System Technology, Ministry of Education\\
\emails
\{wzliu, penny\_lu\}@ahu.edu.cn,
ljjin@stu.ahu.edu.cn,
cxmu@tju.edu.cn,
cysun@seu.edu.cn
}
%\fi

\begin{document}

\maketitle

\begin{abstract}
Intelligent decision-making within large and redundant action spaces remains challenging in deep reinforcement learning. Considering similar but ineffective actions at each step can lead to repetitive and unproductive trials. Existing methods attempt to improve agent exploration by reducing or penalizing redundant actions, yet they fail to provide quantitative and reliable evidence to determine redundancy. In this paper, we propose a method to improve exploration efficiency by estimating the causal effects of actions. Unlike prior methods, our approach offers quantitative results regarding the causality of actions for one-step transitions. We first pre-train an inverse dynamics model to serve as prior knowledge of the environment. Subsequently, we classify actions across the entire action space at each time step and estimate the causal effect of each action to suppress redundant actions during exploration. We provide a theoretical analysis to demonstrate the effectiveness of our method and present empirical results from simulations in environments with redundant actions to evaluate its performance. Our implementation is available at \url{https://github.com/agi-brain/cee.git}.
\end{abstract}

\section{Introduction}

In the past decade, deep reinforcement learning (DRL) has proven to be a powerful tool for tackling sequential decision-making tasks~\cite{mnih2015human,silver2017mastering,berner2019dota}. However, addressing the issue of action redundancy for DRL with large action spaces remains a challenge~\cite{baram2021action,Zhong_Yang_Zhao_2024}. This is because the generalization of neural networks to redundant actions can slow down the training process and lead to a waste of computational resources. As the action space grows, the presence of similar and redundant actions increases, reducing the probability of selecting optimal actions. Furthermore, these redundant actions often result in more repetitive and unproductive trials. Since many real-world decision-making problems involve large and redundant action spaces, overcoming the challenges posed by these redundant actions is crucial for effectively applying DRL in real scenarios~\cite{chandak2019learning,dhakate2022autonomous}.

Existing research suggests that masking out the redundant actions to reduce the action space can significantly improve the performance of DRL~\cite{vinyals2017starcraft,berner2019dota,ye2020mastering,huang2020closer}. Based on that, some works proposed to learn a minimal action space with sampled experiences. For example, Zahavy \textit{et al.}~\shortcite{zahavy2018learn} developed an action elimination network (AEN) that predicts invalid actions using signals provided by the environment as supervision, and masked out these actions during DRL training. Similarly, Chandak \textit{et al.}~\shortcite{chandak2019learning} introduced a structure for learning an action representation model that effectively reduces the original action space. Wu \textit{et al.}~\shortcite{wu2024efficient} transformed the learning of action masking for robotic palletization as a semantic segmentation task, and then embedded the learned action masking model into DRL training. Although these methods learn the reduced action space automatically, they rely on external knowledge of the environments, which limits their generalization.

In this paper, we explain how the one-step state transition can be modeled as a causal graph and identify potentially redundant actions based on their quantified causal effects to the next state distribution in the given context. Since the current state-action pair influences the next state distribution, it is feasible to evaluate the redundancy of each action using historical data or learned models. To achieve this, Baram \textit{et al.}~\shortcite{baram2021action} defined action redundancy using the Kullback-Leibler (KL) divergence between the next-state distributions conditioned on the state-action pair and the expected next-state distribution induced by the policy. The redundant actions could be avoided by minimizing the action redundancy. Zhong \textit{et al.}~\shortcite{Zhong_Yang_Zhao_2024} introduced a method that classified the actions across the whole action space via a similarity matrix. The actions with high similarity are deemed redundant and would be masked out during execution, thus allowing for a reduction of the action space. To measure the causal influence of actions, Seitzer \textit{et al.}~\shortcite{NEURIPS2021_c1722a79} calculated the conditional mutual information, which is related to the independence of the action and the next state given the current state. These works have provided valuable insights for our research. 

This paper introduces a novel method called causal effect estimation (CEE) to reduce the action space for DRL. We begin by defining the causal effect of each action, which is similar to the concept of action redundancy proposed by~\cite{baram2021action}. To simplify the calculation of the KL divergence for state distributions, we pre-train an inverse dynamics model to obtain a value network that provides equivalent measures of causal effects. This allows us to set an appropriate threshold for identifying redundant actions based on their causal effects. Moreover, it is found that combining CEE with action classification~\cite{Zhong_Yang_Zhao_2024} significantly improves performance. We support our method with theoretical analysis and extensive experimental results, demonstrating its effectiveness.

The contributions of our work are summarized as follows:

\begin{itemize}
    \item We introduce a causal graphical model to represent the next-state transitions in DRL and characterize potentially redundant actions based on their causal effects on the next-state distribution. This enables a reduction in the action space by systematically eliminating redundant actions, thereby improving exploration efficiency.
    \item We develop an efficient method to evaluate the causal effect of each action in the next-state transition. By pre-training a value network alongside an inverse dynamics model, we simplify the calculation of KL divergence, enabling direct and accurate estimation of each action's causal effect without requiring explicit environmental dynamics or external knowledge.
    \item We propose an action masking mechanism that integrates action classification results with causal effect estimation to filter out redundant actions during exploration. Our method demonstrates significant performance improvements across various tasks, as validated through extensive experimental evaluations.
\end{itemize}

\section{Related Work}

\subsubsection{Eliminating Redundant Actions in DRL}

Existing research has shown that reducing redundant actions can accelerate the learning process. For instance, Dulac-Arnold \textit{et al.}~\shortcite{dulac2015deep} embedded prior action information into a continuous space, achieving sub-linear complexity relative to the number of actions. Zahavy \textit{et al.}~\shortcite{zahavy2018learn} combined the DRL algorithm with an AEN module to predict and eliminate invalid actions. More directly, using action masks provided by the environment is a commonly used trick to remove redundant actions, such as StarCraft~II~\cite{vinyals2017starcraft}, Dota2~\cite{berner2019dota}, and Honor of Kings~\cite{ye2020mastering}. Huang \textit{et al.}~\shortcite{huang2021gym}~proposed the Gym-µRTS library, demonstrating that action masks enhance the performance of real-time strategy (RTS) games. Additionally, Kanervisto \textit{et al.}~\shortcite{kanervisto2020action} and Ye \textit{et al.}~\shortcite{ye2020mastering} conducted ablation studies showing that masking invalid actions is crucial for improving performance. However, these methods become increasingly limited as the invalid action space expands. Huang and Onta{\~{n}}{\'{o}}n~\shortcite{huang2020closer} provided a detailed examination of invalid action masking in policy-based DRL algorithms, demonstrating that this technique empirically scales well as the number of invalid actions increases. Zhong \textit{et al.}~\shortcite{Zhong_Yang_Zhao_2024} employed a redundant action classification mechanism to construct a minimal action space to mask redundant actions and improve exploration efficiency. In this paper, we extend the above ideas and introduce a quantitative measurement of actions' redundancy from a causal perspective.

\subsubsection{Estimating the Causal Effects of Actions}

Recent research has shown that sequential decision-making tasks can be represented using causal graphical models, which makes it increasingly common to estimate how actions causally influence state transition. Corcoll and Vicente~\shortcite{DBLP:journals/corr/abs-2010-01351} proposed an approach that explores random effects and uses counterfactual reasoning to identify the causal impact of actions. To improve the sample efficiency of DRL, Pitis \textit{et al.}~\shortcite{pitis2020counterfactual} used an attention-based method to discover local causal structures in the decoupled state space, thereby enhancing the predictive performance of the model. Madumal \textit{et al.}~\shortcite{Madumal_Miller_Sonenberg_Vetere_2020} introduced an action influence model to provide causal explanations for the behavior of model-free DRL agents. Seitzer \textit{et al.}~\shortcite{NEURIPS2021_c1722a79} introduced the influence situation-dependent causal influence metric based on conditional mutual information to measure the causal impact of actions on the environment. 
Besides, Zhu \textit{et al.}~\shortcite{zhu2022invariant} induced the causation between actions and states by taking the invariance of action effects among the states. Herlau and Larsen~\shortcite{herlau2022reinforcement} identified the causal variables by maximizing the natural indirect effects and developed effective causal representations related to DRL agents. In scenarios with sparse rewards and high action noise, Zhang \textit{et al.}~\shortcite{NEURIPS2023_402e1210} measured the causal relationship between state and actions, and developed an interpretable reward redistribution methods. Nguyen \textit{et al.}~\shortcite{nguyen2024variableagnosticcausalexplorationreinforcement} proposed a variable-agnostic causal exploration method to discover the causal relationships among states and actions. Unlike these methods, our approach (i) evaluates the causal effects of actions using a value network that is pre-trained alongside an inverse dynamics model, which replaces the computation of KL divergence, (ii) masks out the actions with low causal effects after classification, leading to significant performance improvements.
 
\section{Background}

We focus on sequential decision-making tasks that can be modeled as a Markov decision process (MDP). An MDP is defined as a tuple $(\mathcal{S}, \mathcal{A}, P, R, \gamma)$, where $\mathcal{S}$ is the state space, $\mathcal{A}$ is the action space, $P: \mathcal{S} \times \mathcal{A} \times \mathcal{S} \to [0, 1]$ denotes the transition probability function, $R: \mathcal{S} \times \mathcal{A} \times \mathcal{S} \to \mathbb{R}$ is the reward function, and $\gamma \in (0, 1)$ is the discount factor. A DRL agent interacts with the environment using a policy $\pi$ that generates an action $a_t \in \mathcal{A}$ given $s_t \in \mathcal{S}$. The environment then transitions to the next state $s_{t+1} \in \mathcal{S}$ and provides a reward $r_{t+1}$. The goal of a DRL algorithm is to find an optimal policy $\pi^*$ that maximizes the cumulative discounted rewards over time: $J(\pi) = \Sigma^{\infty}_{t=0}{\mathbb{E}_{(s_t, a_t) \sim \rho_{\pi}}[\gamma^t r_{t+1}]}$, where $\rho_{\pi}$ denotes the trajectory distribution induced by $\pi(a_t|s_t)$.

\subsection{Policy with Action Mask}

This paper focuses on the discrete action spaces, where actions can be enumerated individually: $\mathcal{A}=\{A_1, \dots, A_N\}$, $N=|\mathcal{A}|$ is the size of action space. In a standard policy gradient-based algorithm, given state $S \in \mathcal{S}$, the policy can be parameterized by $\theta = [l_1, l_2, \dots, l_N]$:

\begin{equation}
    \pi_{\theta}(\cdot | S) = \text{softmax}([l_1, l_2, \dots, l_N]),
\end{equation}
where
\begin{equation}
    \pi_{\theta}(A_i | S) = \frac{\exp{(l_i)}}{\Sigma^{N}_{j=1}{\exp{(l_j)}}}.
\end{equation}
Let $\boldsymbol{m}=[m_1, m_2, \dots, m_N]^{\top}$ be a masking vector, in which
\begin{equation}
    m_i = \left\{
    \begin{aligned}
        & 1, & & \text{if } A_i \text{ is available,} \\
        & 0^{+}, & &\text{if } A_i \text{ is eliminated.}
    \end{aligned}
    \right.
\label{eq:masking_vector_definition}
\end{equation}
Here, $m_i = 0^{+}$ means $m = \lim_{x \to 0^{+}}{x}$. Thus, we can get a masked policy of $\pi_{\theta}$ induced by $\boldsymbol{m}$:
\begin{equation}
    \pi^m_{\theta}(\cdot | S) = \text{softmax}([l_1 + \log{(m_1), \dots, l_N + \log{(m_N)}]}),
\label{equation:masked_policy}
\end{equation}
and specifically, $\forall A_i \in \mathcal{A}$, we have
\begin{equation}
    \pi^m_{\theta}(A_i | S) = \left\{
    \begin{aligned}
        & \pi_{\theta}(A_i | S), & & m_i = 1, \\
        & 0, & & m_i = 0^{+}.
    \end{aligned}
    \right.
\end{equation}

As analyzed in~\cite{huang2020closer}, the policy gradient produced by the masking vector $\boldsymbol{m}$ remains valid. Therefore, the masked policy in Eq.(\ref{equation:masked_policy}) will not disrupt the training process.

\subsection{Causal Graphical Model for State Transition}

\begin{wrapfigure}{r}{0cm}
\centering
\includegraphics[width=0.2\textwidth]{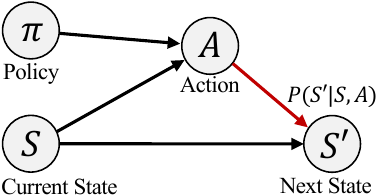}
\caption{Causal graphical model for state transition.}
\label{fig:causal_graph}
\end{wrapfigure}

The one-step transition from state $S \in \mathcal{S}$ to $S' \in \mathcal{S}$ with policy $\pi$ can be described by a causal graphical model (CGM)~\cite{scholkopf2022causality} over the set of nodes $\mathcal{V}=\{S, \pi, A, S'\}$ (as illustrated in Figure~\ref{fig:causal_graph}). Denote $\text{Pa}(V_i)$ as the set of all parent nodes of $V_i \in \mathcal{V}$. The stochastic intervention on node $V_i$ can be built by replacing its distribution $p(V_i|\text{Pa}(V_i))$ as $q(V_i|\text{Pa}(V_i))$, denoted as $\text{do}(V_i:=q(V_i|\text{Pa}(V_i)))$~\cite{NEURIPS2021_c1722a79}. Based on that, we introduce the definition of causal actions.

\begin{definition}[Causal Actions]
    The action $A \in \mathcal{A}$ is a cause of state $S'$ given state $S$, if and only if $\exists \pi_1, \pi_2$, $\pi_1 \neq \pi_2$, $P(S'|S, \text{do}(A:=\pi_1(A|S))) \neq P(S'|S, \text{do}(A:=\pi_2(A|S)))$. $A$ is called a causal action from state $S$ to $S'$. 
\label{definition:causal-action}
\end{definition}

A subset of action space that contains all causal actions from $S$ to $S'$ is called a local causal action space, denoted as $\mathcal{A}_{S \to S'} \subseteq \mathcal{A}$. If action $A$ is a cause of state $S'$, there is an edge from $A$ to $S'$ in the CGM: $A \longrightarrow S'$. We denote the CGM by $\mathcal{G}(\mathcal{V}, \mathcal{E})$, which consists of a set of nodes $\mathcal{V}$ and a set of all edges $\mathcal{E}$. It should be noticed that $\mathcal{G}(\mathcal{V}, \mathcal{E})$ must be a directed acyclic graph. In the following sections, we will show how to quantitatively evaluate the causality of actions $A_i \in \mathcal{A}$ (for $i=1,\dots, N$), and how to leverage this evaluation to reduce the action space for the DRL agent. 

\section{Method}

The framework of our proposed method is illustrated in Figure~\ref{fig:algorithm}. We start by introducing a measurement to estimate the causal effects of actions. Next, we describe how the action space can be efficiently reduced by leveraging the causal effects across the entire action set. Furthermore, we introduce a grouping strategy that categorizes actions into distinct groups and selects those with causal effects exceeding a predefined threshold within each group. Additional implementation details are provided to enhance clarity.

\begin{figure*}[t]
\centering
\includegraphics[width=0.8\textwidth]{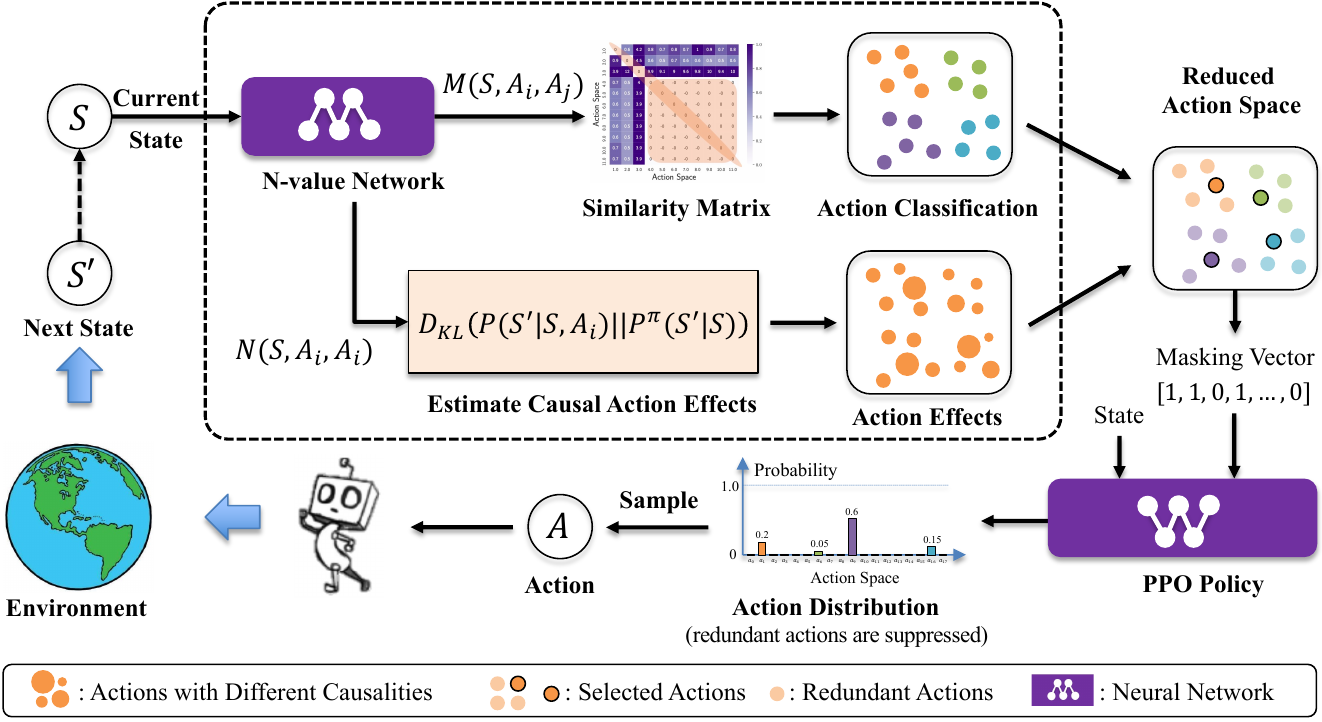}
\caption{The framework of our method.}
\label{fig:algorithm}
\end{figure*}

\subsection{Causal Effects Estimation for Actions}

According to the definition of causal actions, we can determine the existence of causality between action $A \in \mathcal{A}$ and next state $S' \in \mathcal{S}$ by intervening on the distribution of $A$ and observing whether the probability distribution of the $S'$ changes. We describe the relationship between them based on conditional probability independence. If the intervention of action $A$ in $S$ never changes the probability distribution of state $S'$ (i.e., it is not a causal action), then the action $A$ and $S'$ are considered conditionally independent: $S' \!\perp\!\!\!\perp A | S$. In other words, there is an edge from $A$ to $S'$ given $S$ for a causal graph $\mathcal{G}$ if and only if $S' \not\!\perp\!\!\!\perp A | S$ \cite{NEURIPS2021_c1722a79}. This dependence can be measured by the conditional mutual information. Thus, we introduce the definition of measurement for causal effects of actions:

\begin{definition}[Causal Effects of Actions]
    If an action $A \in \mathcal{A}$ is a cause of $S' \in \mathcal{S}$ given $S \in \mathcal{S}$ and policy $\pi$, we define the causal effects of $A$ by the KL divergence:
    \begin{equation}
        C^{\pi}(A | S \rightarrow S') = D_{KL}\left(P(S'|S, A) \| P^{\pi}(S'|S)\right).
    \label{eq:definition_of_causality}
    \end{equation}
\label{definition:causal_effects_of_actions}
\end{definition}

Form the non-negativity of KL divergence, we can infer that $C^{\pi}(A | S \rightarrow S') >= 0$. Next, we will discuss how to use the measurement of causal effects in Definition~\ref{eq:definition_of_causality} to determine whether an action is causal. Before that, we fist present a theorem (proof in Appendix):

\begin{theorem}
    Consider a causal graphical model $\mathcal{G}$ induced by a set of nodes $\mathcal{V}=\{S, \pi, A, S'\}$ and the set of edges $\mathcal{E}$. $A \in \mathcal{V}$ is a cause of $S'$ given $S$ (i.e. there is an edge from $A$ to $S'$), if and only if $C^{\pi}(A | S \rightarrow S') > 0$, otherwise, $C^{\pi}(A | S \rightarrow S') = 0$. 
\label{theorem:causal_effects}
\end{theorem}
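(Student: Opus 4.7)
The plan is to prove Theorem~\ref{theorem:causal_effects} as a pair of implications, leaning on the standard characterization of KL divergence (nonnegative, zero iff the two distributions coincide almost everywhere) together with the equivalence between the causal (interventional) definition in Definition~\ref{definition:causal-action} and the conditional independence $S' \!\perp\!\!\!\perp A \mid S$ on the CGM $\mathcal{G}$. The overall strategy is to show that the KL divergence in Eq.~(\ref{eq:definition_of_causality}) vanishes exactly when $P(S'\mid S,A)$ has no functional dependence on $A$, and that this in turn is equivalent to the absence of an edge $A \longrightarrow S'$.

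First I would handle the easy direction: if $A$ is \emph{not} a cause of $S'$ given $S$, then by Definition~\ref{definition:causal-action} every stochastic intervention $\mathrm{do}(A:=\tilde{\pi}(A\mid S))$ yields the same post-interventional distribution over $S'$. Since on $\mathcal{G}$ the action node $A$ has only $S$ and $\pi$ as parents, this independence of the outcome from the intervention implies $P(S'\mid S, A) = P(S'\mid S)$ as conditional distributions, i.e.\ $S' \!\perp\!\!\!\perp A \mid S$. Marginalizing over the policy then gives
\begin{equation}
    P^{\pi}(S'\mid S) = \sum_{a \in \mathcal{A}} \pi(a\mid S)\, P(S'\mid S, a) = P(S'\mid S, A),
\end{equation}
so the two distributions inside the KL in Eq.~(\ref{eq:definition_of_causality}) coincide, and therefore $C^{\pi}(A\mid S\to S') = 0$.

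For the converse, I would argue the contrapositive. Suppose $C^{\pi}(A\mid S\to S') = 0$. By Gibbs' inequality, $D_{KL}(P(S'\mid S,A)\,\|\,P^{\pi}(S'\mid S)) = 0$ forces $P(S'\mid S,A) = P^{\pi}(S'\mid S)$ for every $A$ in the support of $\pi(\cdot\mid S)$. Because the right-hand side does not depend on $A$, this says that the conditional distribution of $S'$ given $S$ and $A$ is constant in $A$; equivalently, $S' \!\perp\!\!\!\perp A \mid S$. Plugging any two policies $\pi_1 \neq \pi_2$ into the interventional formula $P(S'\mid S,\mathrm{do}(A:=\pi_i)) = \sum_{a} \pi_i(a\mid S) P(S'\mid S,a)$ then produces the same distribution over $S'$, so $A$ fails the causal-action criterion of Definition~\ref{definition:causal-action}, and no edge $A \longrightarrow S'$ exists in $\mathcal{G}$.

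The main obstacle I anticipate is the rigorous bridging between the interventional definition of causality and the observational conditional independence used by the KL criterion. In general, this equivalence requires faithfulness of the distribution to the graph $\mathcal{G}$ (so that $d$-separation and conditional independence agree) and that the observational $P(S'\mid S,A)$ equals the interventional $P(S'\mid S,\mathrm{do}(A))$, which holds here because $S$ blocks all back-door paths from $A$ to $S'$ in the CGM of Figure~\ref{fig:causal_graph}. I would state these standing assumptions explicitly at the start of the proof (citing~\cite{scholkopf2022causality,NEURIPS2021_c1722a79}); once they are in place, both directions reduce to the KL-equality argument above.
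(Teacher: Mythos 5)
Your proposal is correct and follows essentially the same route as the paper's proof: both directions reduce to the fact that the KL divergence in Eq.~(\ref{eq:definition_of_causality}) vanishes exactly when $P(S'\mid S,A)$ coincides with $P^{\pi}(S'\mid S)$, which in turn is equated with the interventional criterion of Definition~\ref{definition:causal-action}. Your version is somewhat more careful than the paper's — you make the marginalization $P^{\pi}(S'\mid S)=\sum_a \pi(a\mid S)P(S'\mid S,a)$ explicit and flag the faithfulness/back-door assumptions needed to bridge the interventional and observational statements, which the paper leaves implicit — but the underlying argument is the same.
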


Theorem~\ref{theorem:causal_effects} provides a strict condition for determining whether an action is causal. In practice, a threshold $\tau > 0$ can be used to select the causal actions, forming a reduced action space known as the approximate causal action space:

\begin{definition}[Approximate Causal Action Space]
    Given state pairs $S, S' \in \mathcal{S}$ and a discrete action space $\mathcal{A} = \{A_1, \dots A_N\}$, the approximate causal action space $\mathcal{A}_{S \to S', \tau}$ is defined as
    \begin{align}
        & \mathcal{A}_{S \to S', \tau} \subseteq \mathcal{A}, \label{definition:approximate-calsal-effects-eq1}\\
        & \text{s.t.  } \forall A_i \in \mathcal{A}_{S \to S', \tau}, C^{\pi}(A_i | S \rightarrow S') > \tau.\label{definition:approximate-calsal-effects-eq2}
    \end{align}
\label{definition:approximate-calsal-effects}
\end{definition}

Then, we can set the masking vector $\boldsymbol{m} = [m_1, \dots, m_N]^{\top}$ similar to Eq.(\ref{eq:masking_vector_definition}) as

\begin{equation}
    m_i = \left\{
    \begin{aligned}
        & 1, & & \text{if } A_i \in \mathcal{A}_{S \to S', \tau}, \\
        & 0^{+}, & &\text{if } A_i \notin \mathcal{A}_{S \to S', \tau}.
    \end{aligned}
    \right.
\label{eq:masking-approximal-causal-action-space}
\end{equation}
Next, a masked policy $\pi^m_{\theta}$ can be induced by this $\boldsymbol{m}$. $\pi^m_{\theta}$ will be used as the exploration policy, enabling the agent to interact with the environment while avoiding ineffective actions based on the measurement of the causal effects.

\subsection{Estimating Causal Effects of Actions Using Modified Inverse Dynamic Model}

Calculating the KL divergence for state distributions becomes challenging when the state space is large or continuous. To address this complexity in evaluating $C^{\pi}(A | S \rightarrow S')$, we propose an equivalent approach to efficiently estimate the causal effect of actions, thereby reducing computational overhead.

\begin{lemma}
    Define the N-value for $A_i, A_j \in \mathcal{A}$ as
    \begin{equation}
        N(S, A_i, A_j) = \mathbb{E}_{S' \sim P(\cdot|S, A_i)}{\left[ \log{\frac{P^{\pi}(A_j|S, S')}{\pi(A_j|S)}} \right]},
    \label{definition:N-value}
    \end{equation}
    then, the calculation in Eq.(\ref{eq:definition_of_causality}) can be expressed as
    \begin{equation}
        C^{\pi}(A | S \rightarrow S') = N(S, A, A).
        \label{eq:causal-effects-estimation-via-N-values}
    \end{equation}
\label{lemma:causal_action_effect_estimation}
\end{lemma}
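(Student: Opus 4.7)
The plan is to reduce the claim to a direct application of Bayes' rule, since both expressions are expectations over $S' \sim P(\cdot \mid S, A)$ and differ only in what sits inside the logarithm.

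First, I would write out the KL divergence in Definition~\ref{definition:causal_effects_of_actions} as an expectation:
\begin{equation*}
C^{\pi}(A \mid S \to S') = \mathbb{E}_{S' \sim P(\cdot \mid S, A)}\!\left[ \log \frac{P(S' \mid S, A)}{P^{\pi}(S' \mid S)} \right].
\end{equation*}
The target expression $N(S, A, A)$ is also an expectation over the same distribution, so it suffices to show that the integrand $\log \frac{P(S' \mid S, A)}{P^{\pi}(S' \mid S)}$ equals $\log \frac{P^{\pi}(A \mid S, S')}{\pi(A \mid S)}$ pointwise (wherever the densities are positive).

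Second, I would invoke Bayes' rule on the joint distribution induced by $(S, \pi, A, S')$ in the CGM $\mathcal{G}$: for any $(S, A, S')$ with positive probability,
\begin{equation*}
P^{\pi}(A \mid S, S') \; P^{\pi}(S' \mid S) = P(S' \mid S, A) \; \pi(A \mid S),
\end{equation*}
which follows from factoring the joint $P^{\pi}(A, S' \mid S)$ two ways using the structure $S \to A \to S'$ and $S \to S'$ given $S$. Rearranging gives $\frac{P(S' \mid S, A)}{P^{\pi}(S' \mid S)} = \frac{P^{\pi}(A \mid S, S')}{\pi(A \mid S)}$, which is exactly the identity needed.

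Third, substituting this ratio back into the expectation yields
\begin{equation*}
C^{\pi}(A \mid S \to S') = \mathbb{E}_{S' \sim P(\cdot \mid S, A)}\!\left[ \log \frac{P^{\pi}(A \mid S, S')}{\pi(A \mid S)} \right] = N(S, A, A),
\end{equation*}
completing the proof. There is no serious obstacle: the only subtlety is a support/measurability check to ensure that $P^{\pi}(S' \mid S) > 0$ whenever $P(S' \mid S, A) > 0$, which is guaranteed by $\pi(A \mid S) > 0$ together with the law of total probability $P^{\pi}(S' \mid S) = \sum_{A'} \pi(A' \mid S) P(S' \mid S, A')$.
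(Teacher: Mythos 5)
Your proposal is correct and follows essentially the same route as the paper's proof: expand the KL divergence as an expectation over $S' \sim P(\cdot\mid S, A)$, apply Bayes' rule to rewrite the ratio $P(S'\mid S,A)/P^{\pi}(S'\mid S)$ as $P^{\pi}(A\mid S,S')/\pi(A\mid S)$, and identify the result with $N(S,A,A)$. Your added remark on the support condition ($P^{\pi}(S'\mid S)>0$ wherever $P(S'\mid S,A)>0$, provided $\pi(A\mid S)>0$) is a small refinement the paper omits but does not change the argument.
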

The $P^{\pi}(A_j|S, S')$ is the inverse dynamic model that can significantly reduce the calculation compared with the state transition model $P(S'|S, A_j)$. Please refer to Appendix for the proof of Lemma~\ref{lemma:causal_action_effect_estimation}. 

In this paper, we follow the method of no prior mask (NPM)~\cite{Zhong_Yang_Zhao_2024} to pre-train an inverse dynamics model and an N-value network for further use. Once the N-value network is trained, we can evaluate the causal effects more directly during exploration. 

\subsection{Actions Classification}

Although eliminating actions with no causal effect can enhance performance, the number of such actions is relatively small, limiting the overall reduction in the action space. To further optimize the action space, we propose a method that groups actions into clusters and selects those with causal effects exceeding a given threshold from each cluster, thereby constructing a smaller and more focused causal action space. 

To cluster the actions into groups, we define a similarity factor $M(S, A_i, A_j)$ for action $A_i$ and $A_j$ in state $S$, expressed as:
\begin{equation}
    M(S, A_i, A_j) = D_{KL}(P(S'|S, A_i) \| P(S'|S, A_j)).
\end{equation}
As is introduced in~\cite{Zhong_Yang_Zhao_2024}, this similarity factor can be further calculated using
\begin{equation}
    M(S, A_i, A_j) = N(S, A_i, A_i) - N(S, A_i, A_j).
\end{equation}
We use the state-dependent action cluster $\mathcal{A}^{k}_{S, \epsilon}$ to represent the sets of different actions groups:
\begin{align}
    & \mathcal{A}^{k}_{S, \epsilon} \subseteq \mathcal{A}, \text{ and } \cup_{k}{\mathcal{A}^{k}_{S, \epsilon}} = \mathcal{A},  
    \label{eq:action-cluster}\\
    & \text{s.t.  } \forall A_i, A_j \in \mathcal{A}^{k}_{S, \epsilon}, M(S, A_i, A_j) < \epsilon.
    \label{eq:action-cluster-constraints}
\end{align}
By combing the approximate causal action space in Definition~\ref{definition:approximate-calsal-effects} and the state-dependent action cluster in Eq.(\ref{eq:action-cluster})-(\ref{eq:action-cluster-constraints}), we can create the final reduced action space for exploration. 

\subsection{Minimal Causal Action Space}

Although the actions were grouped based on similarity factors from the N-value network, the causal effects within the same group vary, making it difficult to establish a uniform threshold $\tau$ for action selection. Therefore, we gather the causal effects $C^{\pi}(A | S \rightarrow S')$ of actions within group $k$ and normalize them using the softmax operator with temperature $T$, which we refer to as the relative causal effects of actions:

\begin{definition}[Relative Causal Effects of Actions]
    Given a group of actions $\mathcal{A}^{k}_{S, \epsilon}$ in (\ref{eq:action-cluster}). The causal effect of action $A^k \in \mathcal{A}^{k}_{S, \epsilon}$ is measured as $C^{\pi}(A^k|S \rightarrow S')$. Then its relative causal effects in group $k$ can be defined as
    \begin{equation}
        C^{\pi}_{R}(A^k | S \rightarrow S') = \frac{C^{\pi}(A^k | S \rightarrow S') / T}{\Sigma_{A^k_i \in \mathcal{A}^{k}_{S, \epsilon}}{C^{\pi}(A^k_i | S \rightarrow S') / T}}.
    \label{eq:definition_of_relative_causality}
    \end{equation}
\end{definition}
By calculating the relative causal effects of actions in each group, we construct a minimal causal action space by selecting the actions with $C^{\pi}_{R} > \tau$, where $\tau \in (0, 1)$. This minimal causal action space can be described as
\begin{align}
    & \mathcal{A}^{\text{min}}_{S \to S', \tau} \subseteq \mathcal{A}, \label{definition:minimal-calsal-effects-eq1}\\
    & \text{s.t.  } \forall A^k \in \mathcal{A}^{\text{min}}_{S \to S', \tau}, C^{\pi}_{R}(A^k | S \rightarrow S') > \tau.
    \label{definition:minimal-calsal-effects-eq2}
\end{align}
Similar to Eq.(\ref{eq:masking-approximal-causal-action-space}), a masking vector $\boldsymbol{m}$ can be induced by the minimal causal action space. After that, the agent can explore in the environment with the masked policy $\pi^{m}_{\theta}$.

Our method is composed of two phases. In the first phase, we pre-train the inverse dynamics models and the N-value network. The second phase focuses on estimating the causal effects of actions to reduce the action space, as illustrated in Figure~\ref{fig:algorithm}. The detailed procedure of our method is provided in Algorithm~\ref{algorithm:cee}.

\begin{figure*}[t]
\centering
\subfigure[]{
\includegraphics[height=0.28\columnwidth]{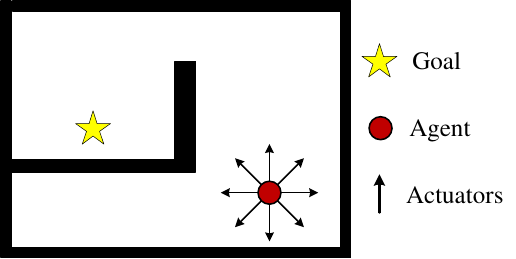}
\label{fig:Maze}}
\hspace{0.05\linewidth}
%\hfill
\subfigure[]{
\includegraphics[height=0.30\columnwidth]{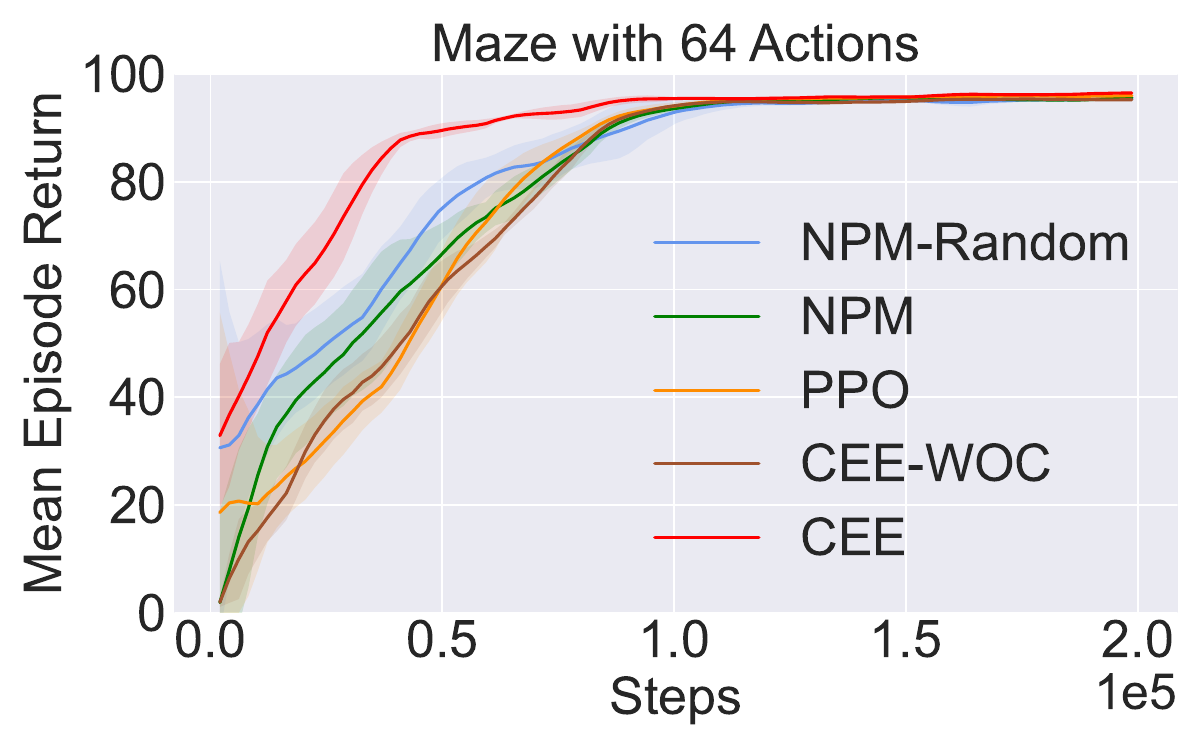}
\label{fig:results:maze_64_actions}}
\hspace{0.05\linewidth}
%\hfill
\subfigure[]{
\includegraphics[height=0.30\columnwidth]{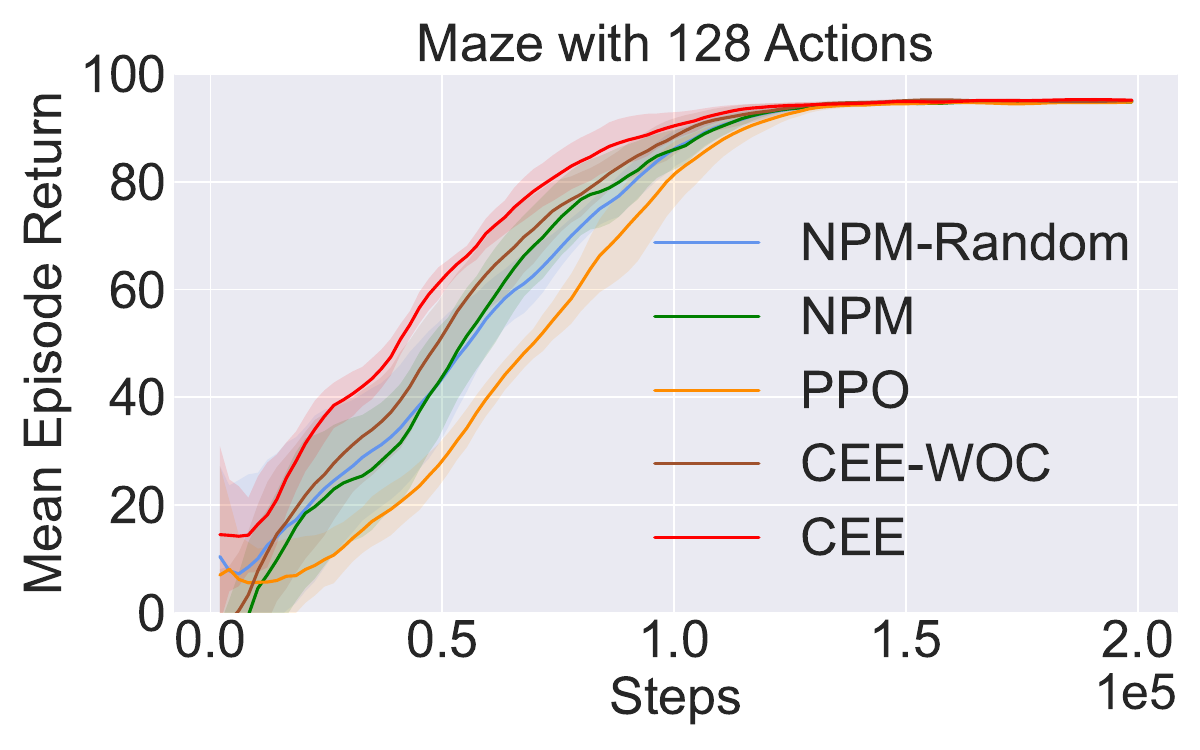}
\label{fig:results:maze_128_actions}}
\caption{Illustration of the Maze environment and the results: (a) Every actuator has two states: on and off. The agent's movement is determined by the vector sum of all the actuators that are on. The agent's task is to reach the goal; (b) Results on Maze environment with 64 actions; (c) Results on Maze environment with 128 actions.}
\label{fig:results}
\end{figure*}

\begin{figure*}[t]
\centering
\subfigure[]{
\includegraphics[height=0.30\columnwidth]{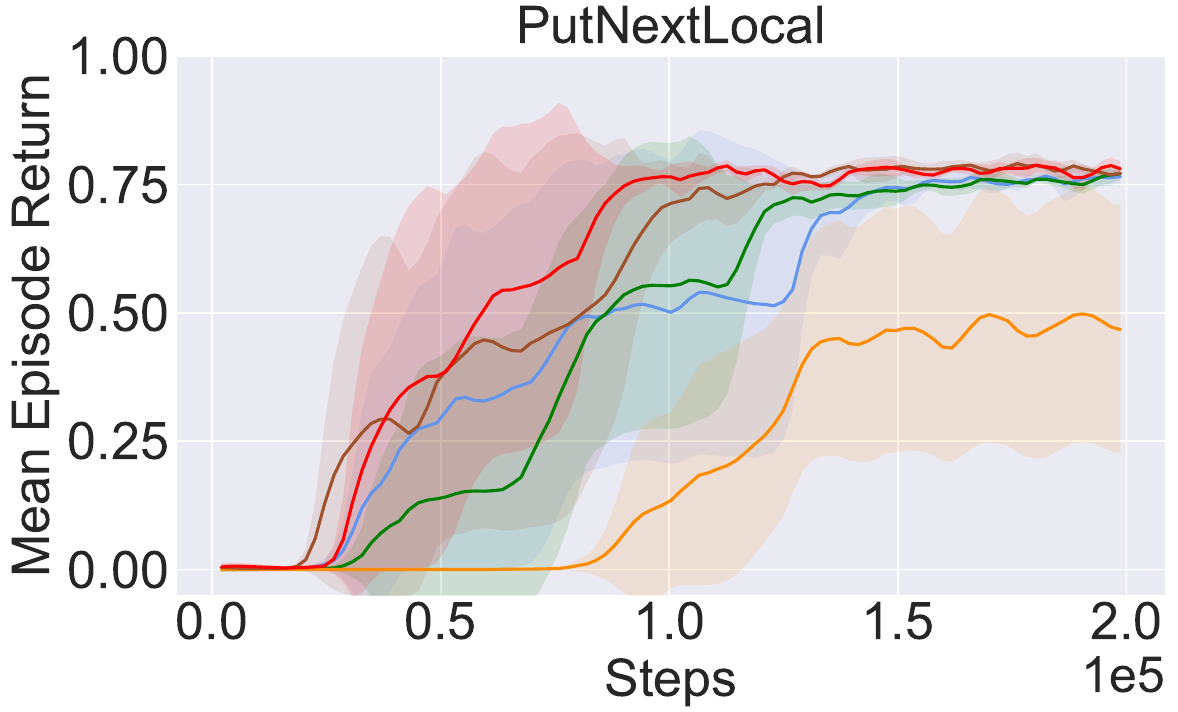}
\label{fig:results:minigrid_1}}
\hfill
\subfigure[]{
\includegraphics[height=0.30\columnwidth]{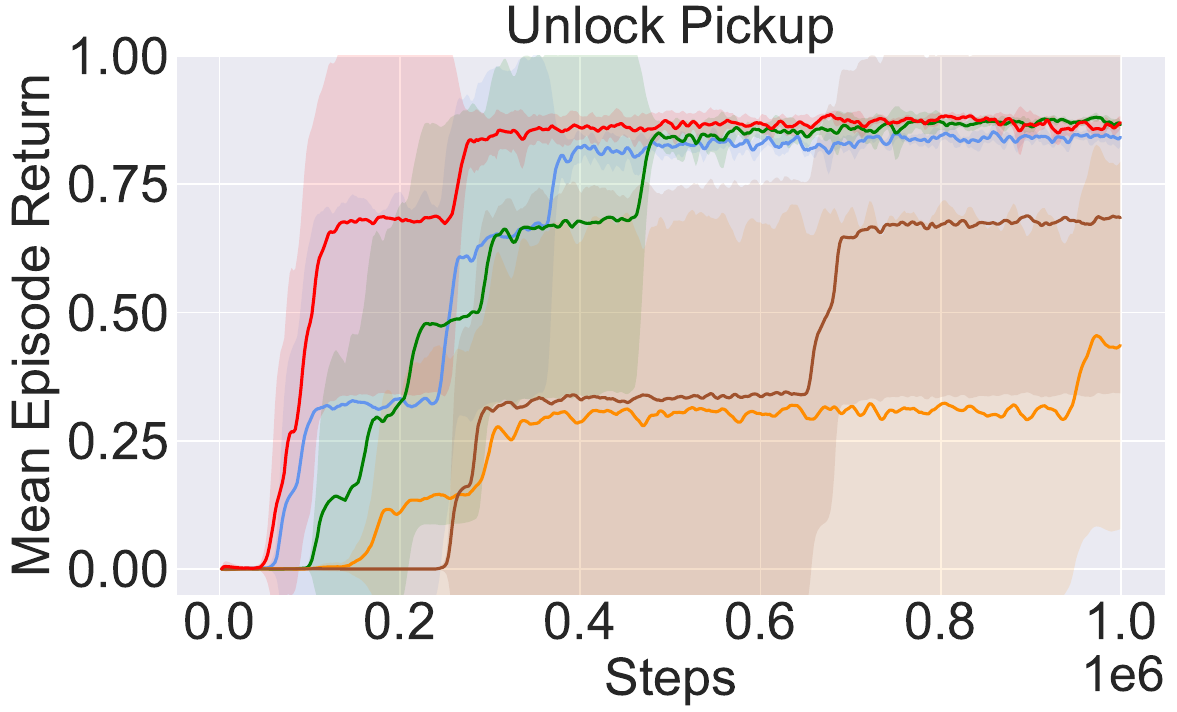}
\label{fig:results:minigrid_2}}
\hfill
\subfigure[]{
\includegraphics[height=0.30\columnwidth]{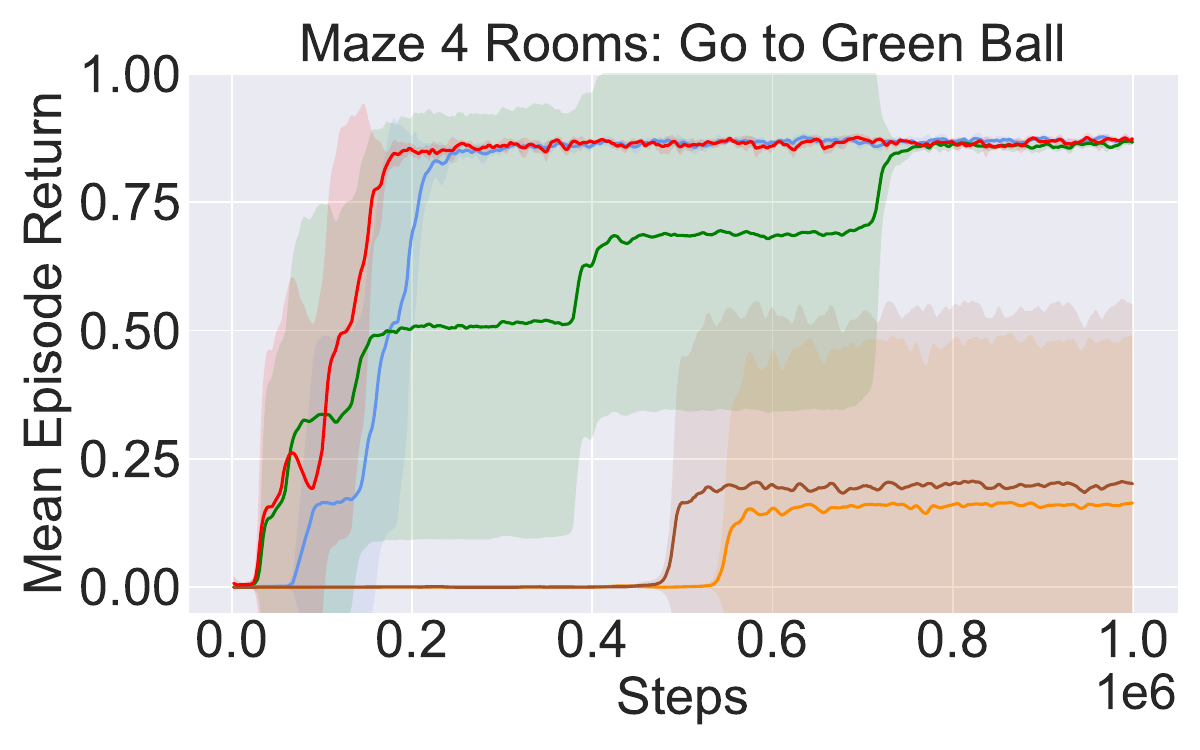}
\label{fig:results:minigrid_3}}
\hfill
\subfigure[]{
\includegraphics[height=0.30\columnwidth]{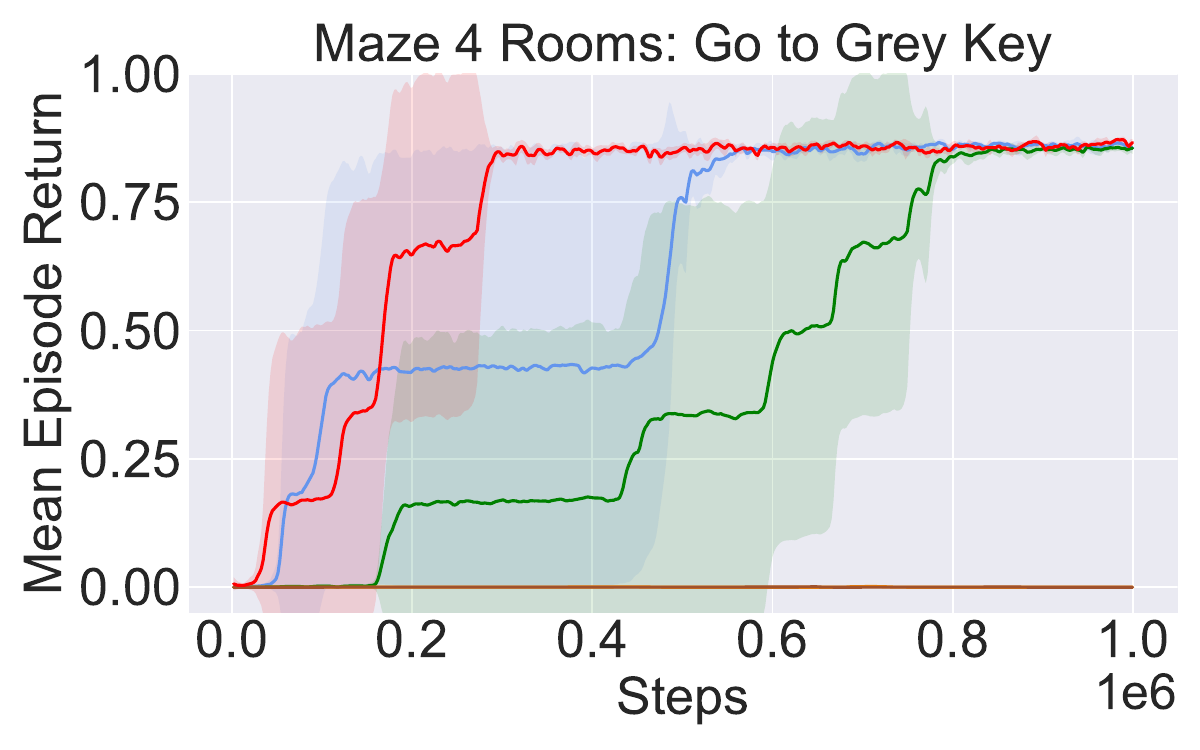}
\label{fig:results:minigrid_4}}
\hfill
\subfigure[]{
\includegraphics[height=0.30\columnwidth]{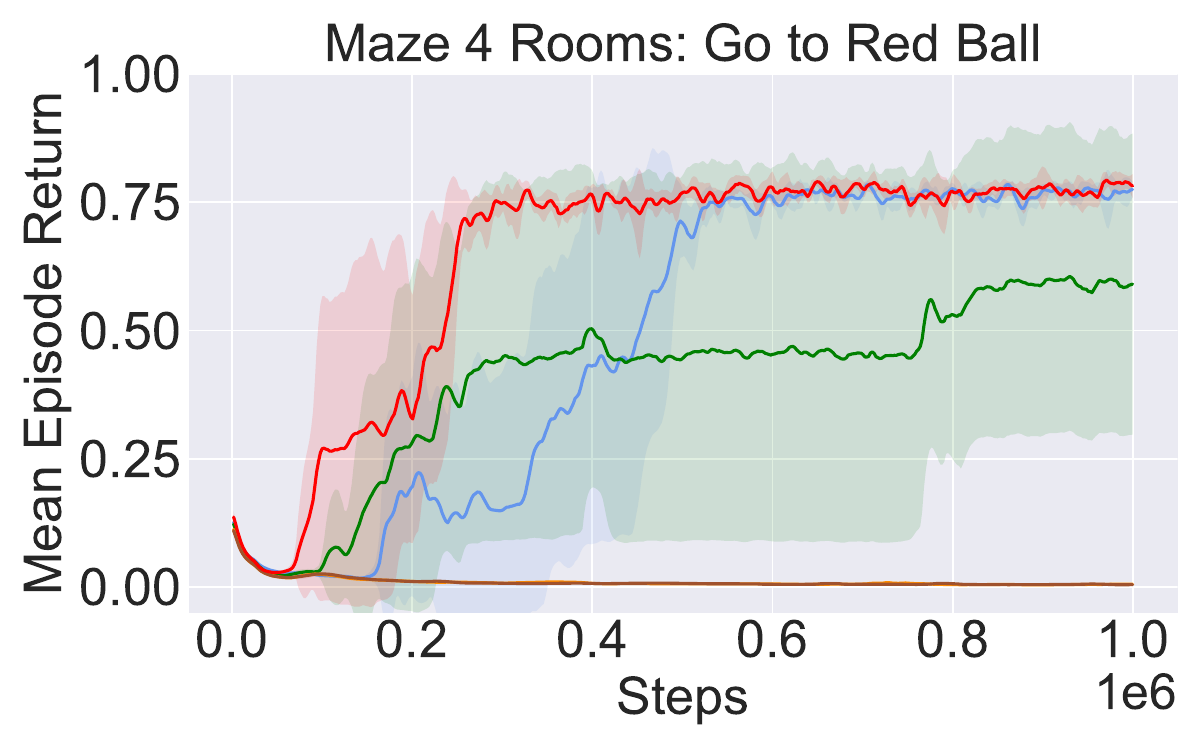}
\label{fig:results:minigrid_5}}
\hfill
\subfigure[]{
\includegraphics[height=0.30\columnwidth]{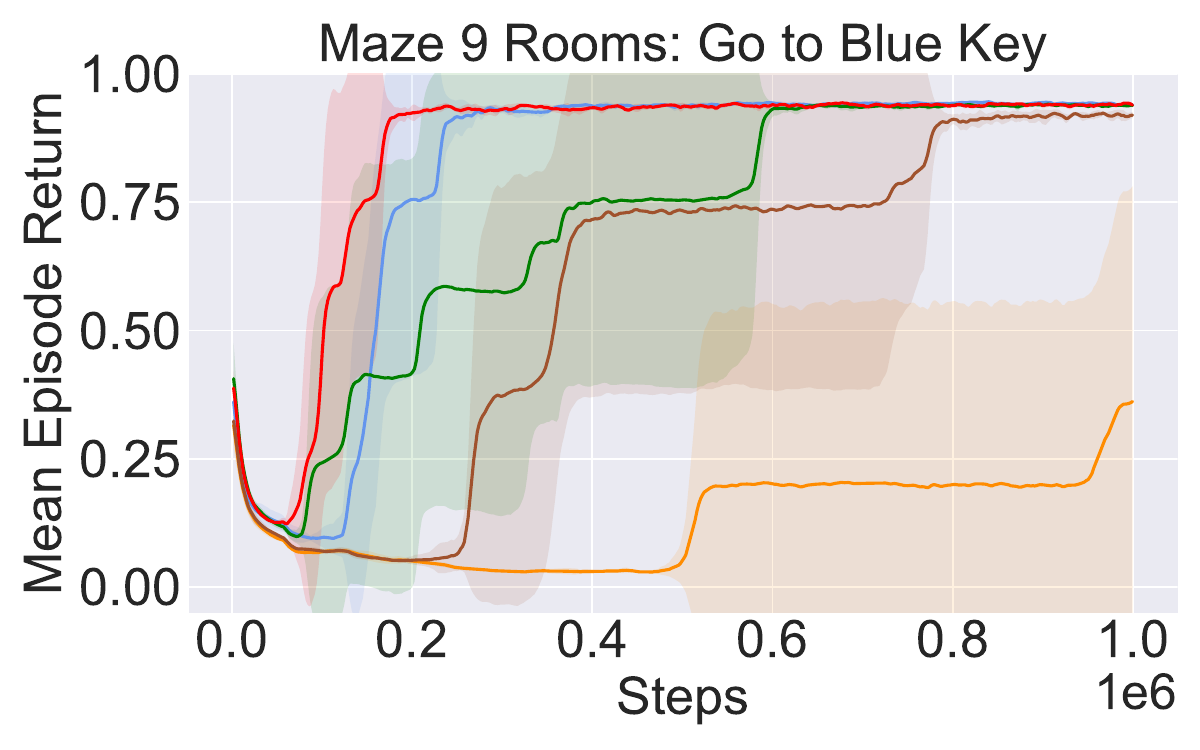}
\label{fig:results:minigrid_6}}
\hfill
\subfigure[]{
\includegraphics[height=0.30\columnwidth]{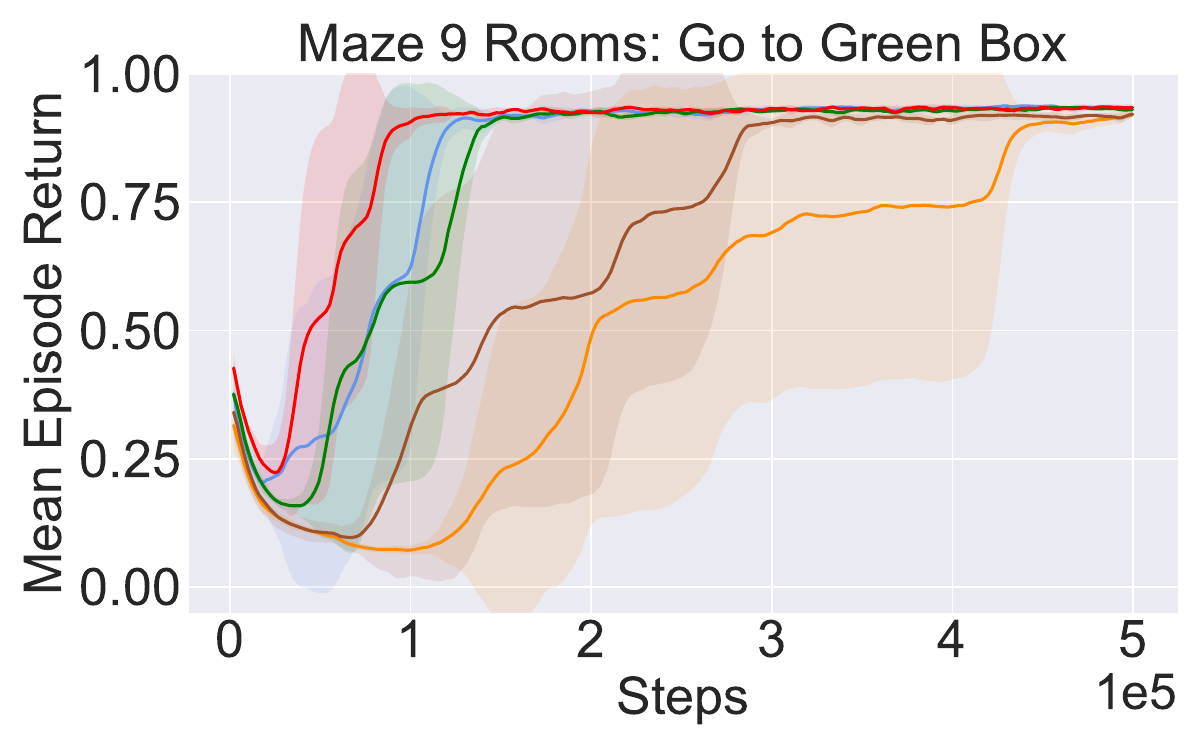}
\label{fig:results:minigrid_7}}
\hfill
\subfigure[]{
\includegraphics[height=0.30\columnwidth]{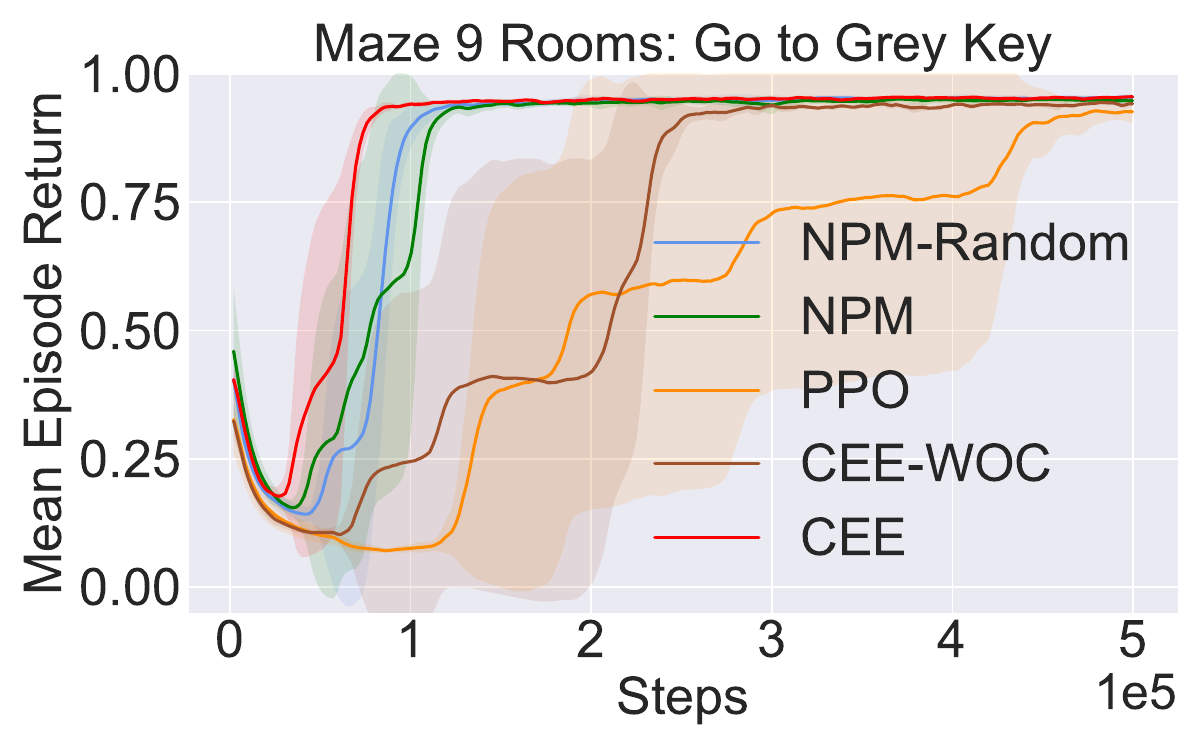}
\label{fig:results:minigrid_8}}
\hfill
\caption{Results on 8 different tasks in MiniGrid environment.}
\label{fig:results:minigrid}
\end{figure*}

\begin{figure*}[!ht]
\centering
\subfigure[]{
\includegraphics[height=0.30\columnwidth]{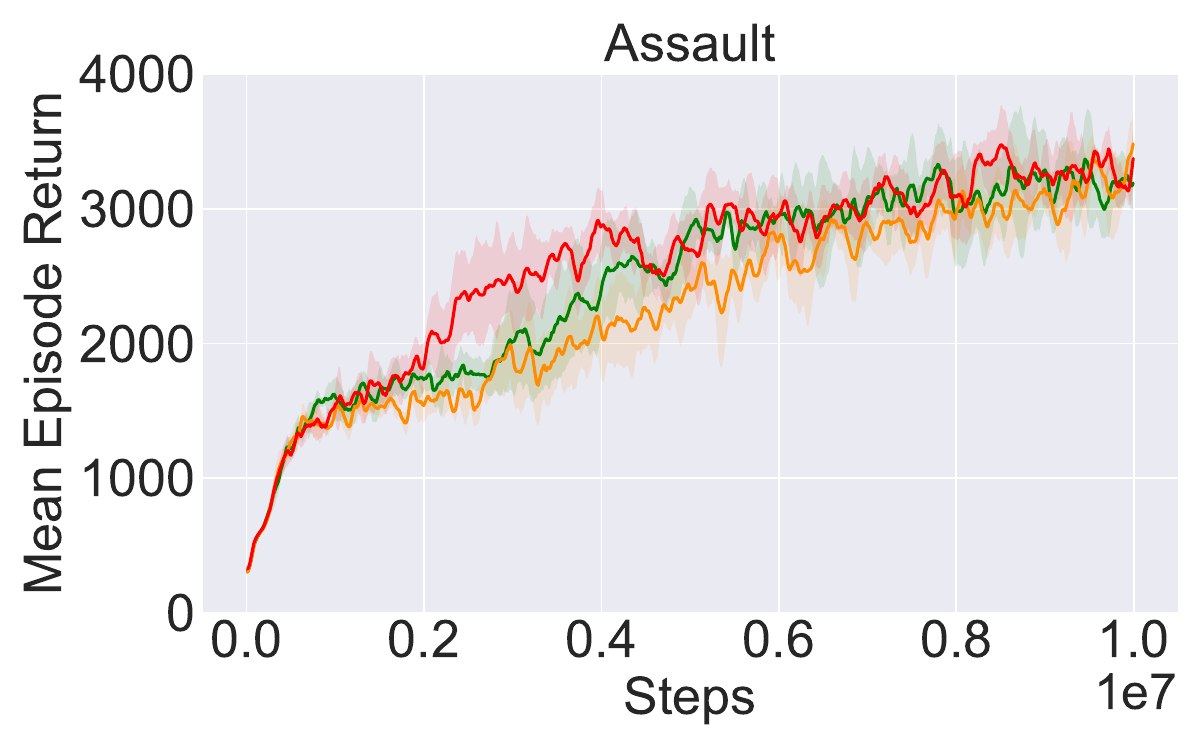}
\label{fig:results:assault}}
\hfill
\subfigure[]{
\includegraphics[height=0.30\columnwidth]{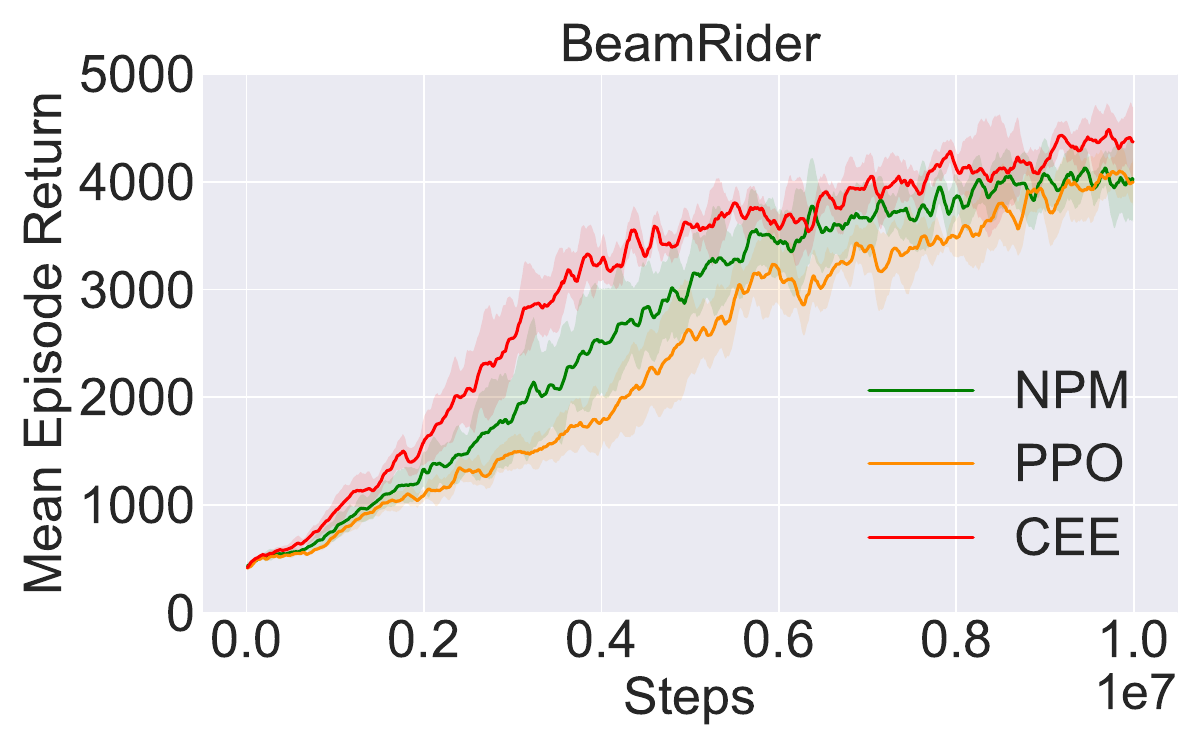}
\label{fig:results:beamrider}}
\hfill
\subfigure[]{
\includegraphics[height=0.30\columnwidth]{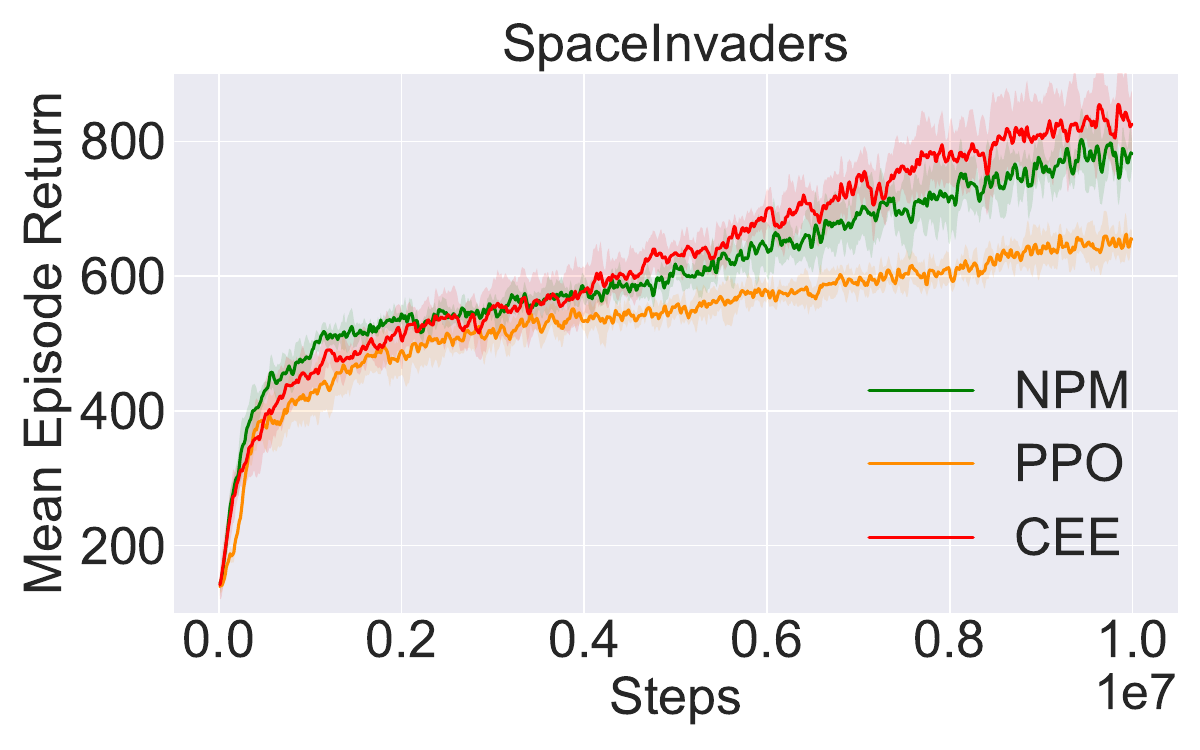}
\label{fig:results:spaceinvaders}}
\hfill
\subfigure[]{
\includegraphics[height=0.30\columnwidth]{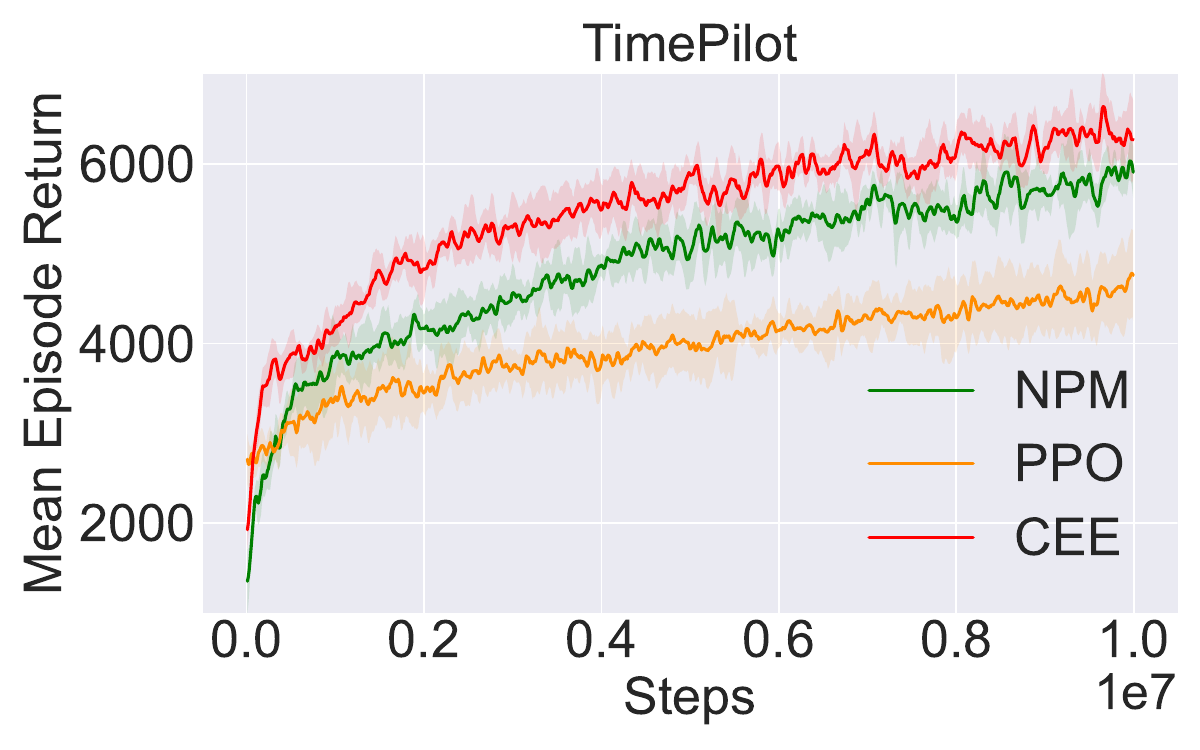}
\label{fig:results:timepilot}}
\caption{Results on part of tasks in Atari 2600 environment.}
\label{fig:results:Atari}
\end{figure*}

\section{Experiments}

In this section, we evaluate our method and compare it with baselines in simulated environments that contain redundant actions. 

\subsection{Simulated Environments}

\subsubsection{Maze}

As a simple case, we tested our method in the Maze environment with continuous state~\cite{chandak2019learning}. As illustrated in Figure~\ref{fig:Maze}, the environment features $n$ equally spaced actuators around the agent, each of them can be turned on or off. The agent will move in the direction pointed by the turned-on actuators, and the final moving direction is the sum of the direction vectors of the turned-on actuators. The size of the action space is exponentially related to the number of actuators, that is $|\mathcal{A}|=2^n$. The agent will only receive a reward if it reaches the goal (star) within 150 steps in each episode. Maze can set a finite and large number of discrete actions. 

\subsubsection{MiniGrid}

We also evaluate our method on MiniGrid~\cite{chevalier2024minigrid}, which is a set of lightweight and fast grid world environments with OpenAI gym interfaces. The environment is partially observable, with a grid size of $N \times N$. Each grid can have at most one object, that is, $0$ or $1$ object. The possible objects are wall, floor, door, key, ball, and goal. Some objects of the same kind may contain different colors. The agent can only pick up one object (key, ball, and box) at most. To open a locked door, the agent must use a key of the same color as the door. The action space is $\mathcal{A} = \{\text{turn left, turn right, forward, pickup, drop, toggle, done}\}$. If the agent performs a redundant action, its state remains unchanged. For example, if there is no object to be picked up in front of the agent, but the agent performs the action of picking up, then the action is redundant and the agent maintains its original state. We need to make the agent avoid such actions in the process of decision-making. The agent is tasked with completing different objectives across various environments. More details about the tasks of MiniGrid environment are described in Appendix.

\subsubsection{Atari 2600}

To further evaluate the effectiveness of our proposed method, we conduct experiments in the Atari 2600 environment, a widely used benchmark in DRL~\cite{bellemare13arcade}. In our experiments, we enable the full action space mode, where the agent can select from all possible actions in the game, regardless of whether some actions are redundant or ineffective in specific states. This setup significantly increases the complexity of the action space. We aim to test the ability of our method to efficiently reduce the action space and identify causal actions while maintaining high performance.

\subsection{Baselines}

We compare our method with the following baselines: (1) PPO~\cite{schulman2017proximal}. PPO is a common on-policy reinforcement learning algorithm that uses the policy gradient method to optimize the policy. (2) NPM~\cite{Zhong_Yang_Zhao_2024}, which is one of the most recently proposed algorithms. NPM first pre-trains the similarity factor model and classifies and eliminates redundant actions in the second phase. (3) NPM-Random. We modified NPM to randomly sample an action for each categorized action category. NPM samples the action with the default index provided by the environment within each category. We replaced the original reward with curiosity reward(e.g.,$R(S,A)=N(S,A)^{-\frac{1}{2}}$)~\cite{pathak2017curiosity} for complex environments during pre-training.

In addition, we also test our method without action classification, called CEE without classification (CEE-WOC). CEE-WOC directly calculates the causal effects of actions defined in Definition~\ref{definition:causal_effects_of_actions} and selects actions within the approximate causal action space in Definition~\ref{definition:approximate-calsal-effects}.

\subsection{Experiment Results}

\begin{figure*}[t]
\centering
\includegraphics[width=0.9\textwidth]{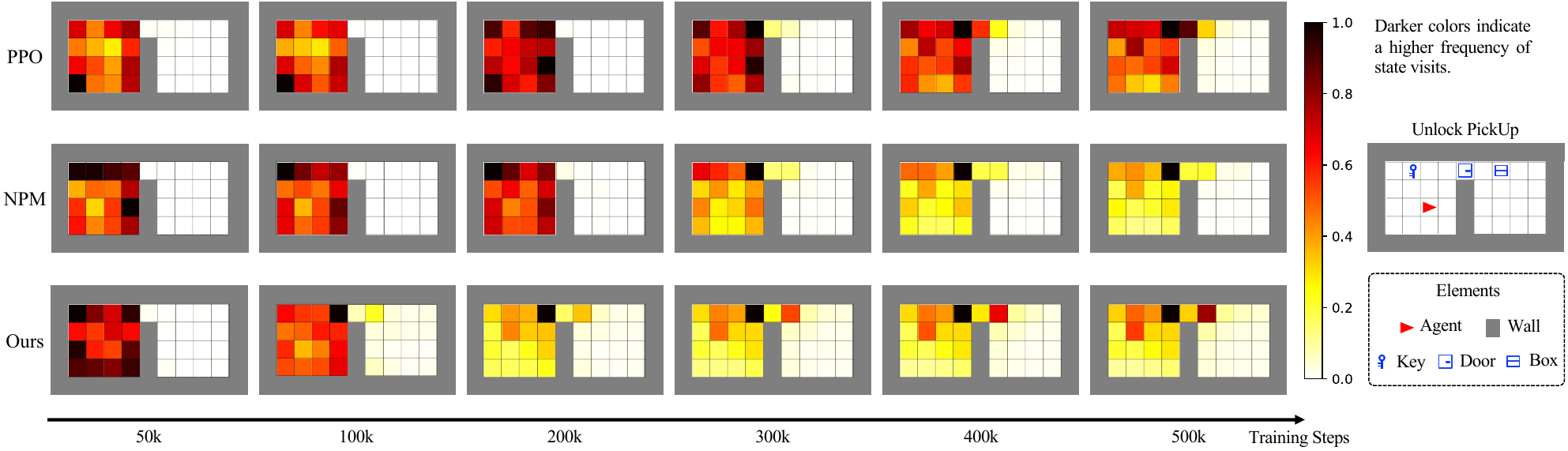}
\caption{The comparison of state visitation heatmap for ``Unlock PickUp" task. The agent's goal is to first find and pick up the key, then use the key to open the door. Once the door is open, the task is complete when the agent retrieves the box.}
\label{fig:heatmap}
\end{figure*}

The experimental settings are detailed in the Appendix. All results presented below were obtained using five different random seeds to ensure robustness and reproducibility. 

\subsubsection{Results of Maze}

In the Maze environment, we evaluated our method against baseline approaches using action sets $\mathcal{A}$ of size $|\mathcal{A}| \in \{64, 128\}$. As shown in Figure~\ref{fig:results:maze_64_actions} and~\ref{fig:results:maze_128_actions}, the experimental results demonstrate that CEE consistently outperforms the baselines across both configurations. Specifically, in the Maze with 64 actions, CEE achieves a significant performance advantage, highlighting its efficiency in optimizing smaller action spaces. Moreover, in the more challenging Maze with 128 actions, CEE-WOC also delivers impressive results, outperforming NPM and PPO even without leveraging action classification. These findings underscore the robustness and adaptability of our method in effectively handling both moderate and large-scale action spaces. 

\subsubsection{Results of MiniGrid}

In the MiniGrid environment, we compared CEE with other baseline algorithms across 8 different tasks. In the experiments in Figures~\ref{fig:results:minigrid_3}-\ref{fig:results:minigrid_8}, curiosity-driven rewards were applied to all algorithms. As shown in Figure~\ref{fig:results:minigrid}, CEE consistently outperforms the baselines across various tasks. Notably, in Figures~\ref{fig:results:minigrid_4} and~\ref{fig:results:minigrid_5}, which feature complex environments with sparse rewards, the PPO algorithm struggles to achieve meaningful results. In contrast, our algorithm begins to improve significantly around 300k steps and achieves the fastest convergence. Even with curiosity-driven rewards in Figures~\ref{fig:results:minigrid_5}, CEE consistently outperforms all baselines, demonstrating its robustness and effectiveness.

Interestingly, NPM-Random shows improved performance compared to NPM, highlighting that the selection of actions within specific categories has a notable impact on performance. Furthermore, CEE-WOC outperforms PPO in most environments, including tasks such as ``PutNextLocal", ``Unlock Pickup", and ``Maze 9 Rooms: Go to Blue Key", further validating the effectiveness of our method. However, for tasks such as ``Maze 4 Rooms: Go to Grey Key" and ``Maze 4 Rooms: Go to Red Ball", both CEE-WOC and PPO fail to achieve satisfactory performance, while other methods perform better. This result indicates that action classification is beneficial.

\subsubsection{Results of Atari}

In Atari 2600 environment, we compare our method with two baselines: PPO and NPM. Based on the results shown in Figure~\ref{fig:results:Atari}, our method consistently outperforms the baseline methods across all tested tasks, including Assault, BeamRider, and TimePilot. The CEE curve demonstrates significantly higher mean episode returns throughout the training process, indicating both superior performance and improved learning efficiency. Notably, CEE converges faster than the baseline methods, with smoother and more stable progress, even in challenging environments. For instance, in BeamRider and TimePilot, CEE achieves not only higher peak returns but also maintains performance stability over extended training steps. These results highlight the robustness and adaptability of our method, showcasing its ability to optimize decision-making and reduce the action space effectively in diverse and complex scenarios.

\subsubsection{Visitation Locations Heatmap}

We compared PPO, NPM, and our method within the ``Unlock PickUp" environment and visualized the results using a heatmap. The agent's task is to pick up the box in the adjacent room, but it first needs to retrieve the key to unlock the locked door. As shown in Figure~\ref{fig:heatmap}, at step 50k, our method demonstrates a deeper and more uniform color distribution, indicating that the agent has thoroughly explored the environment. The agent also spends more time near the doorway, highlighting its recognition of this critical area. By step 100k, the agent successfully enters another room and completes the task. In contrast, other algorithms are still exploring the first room even at step 200k. By step 500k, our algorithm exhibits high stability, with the agent’s movement path clearly defined.

\subsubsection{Ablation Study}

\begin{figure}[!ht]
\centering
\subfigure[]{
\includegraphics[height=0.29\columnwidth]{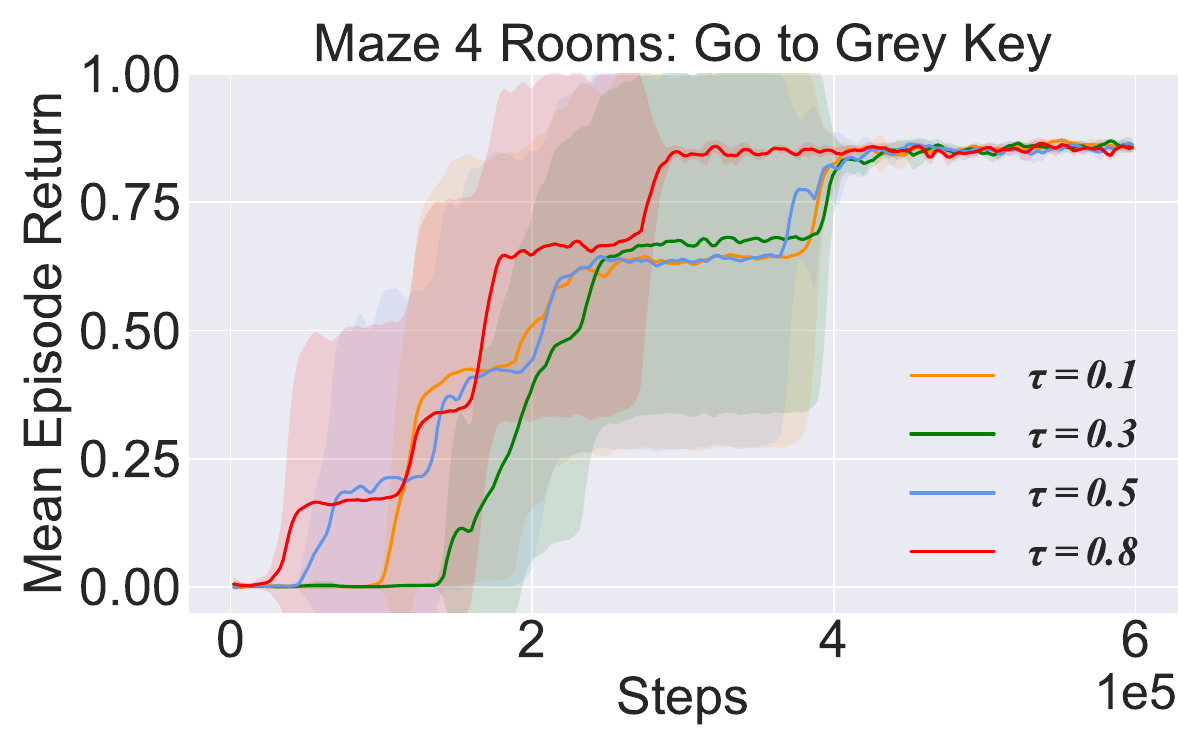}
\label{fig:results:ablation:threshold=0.3}}
\hfill
\subfigure[]{
\includegraphics[height=0.29\columnwidth]{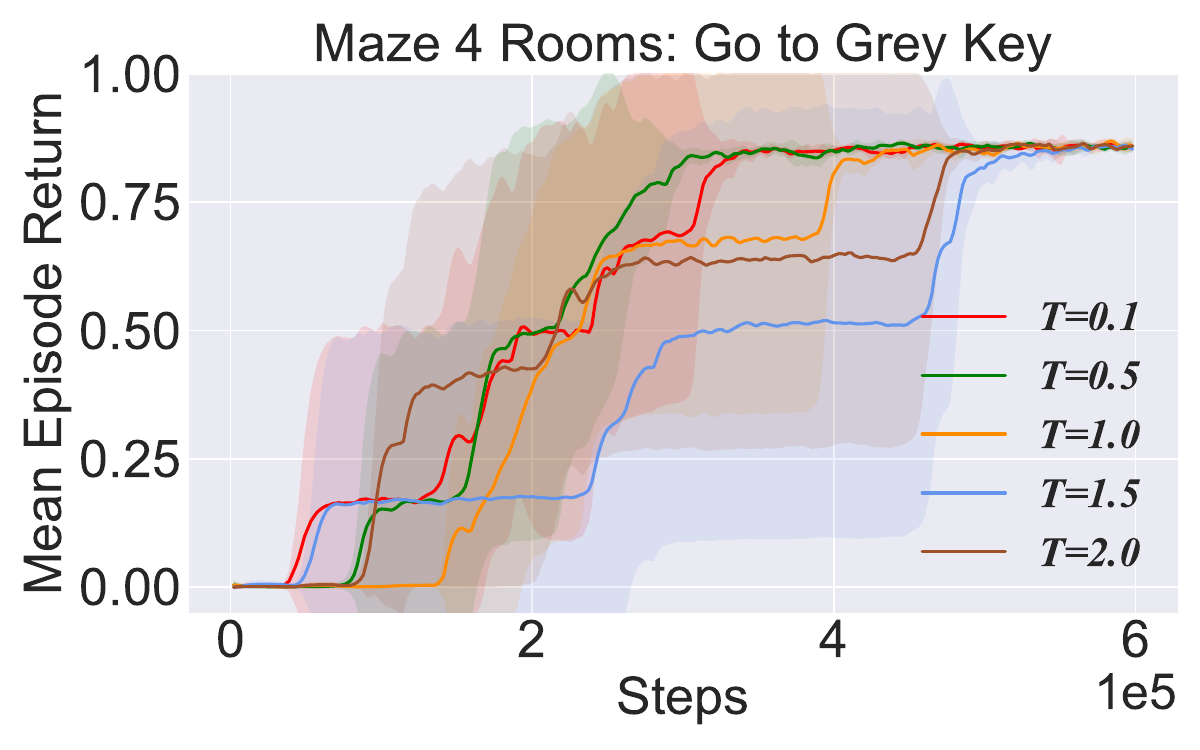}
\label{fig:results:ablation:temperature=1.0}}
\caption{(a) Ablation study with the $\tau \in \{0.1, 0.3, 0.5, 0.8\}$; (b) Ablation study with the $\tau=0.3$ and $T \in \{0.1, 0.5, 1.0, 1.5,2.0\}$.}
\label{fig:results:ablation}
\end{figure}

\paragraph{How to select $\tau$}
To achieve optimal performance, selecting the appropriate threshold is important for our method. When the $\tau$ is too low, the relative causal effects of actions are not sufficiently captured. On the other hand, if the $\tau$ is too high, the relative causal effects are overly emphasized, causing the agent to exploit the environment excessively while lacking efficient exploration. We conducted an ablation study with the $\tau\subset\{0.1, 0.3, 0.5, 0.8\}$. The results as shown in Figure~\ref{fig:results:ablation:threshold=0.3}, indicate that CEE reaches its best performance at the $\tau=0.8$. 

\paragraph{Temperature $T$} In our experiments, the selection of the temperature parameter $T$ also impacts performance. We conducted an ablation study, setting $\tau=0.3$ and $T\subset\{0.1, 0.5, 1.0, 1.5,2.0\}$ in the experiments shown in Figure~\ref{fig:results:ablation:temperature=1.0}. We observed that the algorithm performs better when $T<=1$, indicating that within an action category, actions with higher causal effects contribute positively to the algorithm's performance. 

\section{Conclusion}

In this paper, we proposed a new method to improve the performance of DRL by reducing the action space through CEE. We introduced a causal graphical model to represent state transitions and identify potentially redundant actions based on their causal effects on the next state. By pre-training an N-value network alongside an inverse dynamics model, we provided a direct and efficient evaluation of causal effects. Furthermore, we developed an action masking mechanism that integrates action classification with CEE, significantly improving both exploration efficiency and learning performance.
Extensive experiments conducted across diverse environments validated the effectiveness of our method, demonstrating notable improvements in learning efficiency and overall performance. This work presents a promising framework for tackling the challenges of large action spaces in DRL and paves the way for future research in action space reduction for complex decision-making problems.

\appendix

\section*{Acknowledgments}

This work is supported by the National Natural Science Foundation of China (Nos. 62236002, 62495083, 61921004).

%% The file named.bst is a bibliography style file for BibTeX 0.99c
\bibliographystyle{named}
\bibliography{ijcai25}

\newpage
\onecolumn

\clearpage
\begin{center}
    {\LARGE\bfseries Appendix}
\end{center}

\section*{Proof of Theorem~\ref{theorem:causal_effects}}
\label{appendix:proof:theorem-1}

\setcounter{theorem}{0}

\begin{theorem}
    Consider a causal graphical model $\mathcal{G}$ induced by a set of nodes $\mathcal{V}=\{S, \pi, A, S'\}$ and the set of edges $\mathcal{E}$. $A \in \mathcal{V}$ is a cause of $S'$ given $S$ (i.e. there is an edge from $A$ to $S'$), if and only if $C^{\pi}(A | S \rightarrow S') > 0$, otherwise, $C^{\pi}(A | S \rightarrow S') = 0$. 
\end{theorem}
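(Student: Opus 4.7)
The plan is to prove both directions by exploiting the standard fact that $D_{KL}(P \| Q) \geq 0$ with equality iff $P = Q$, combined with the causal Markov condition on the acyclic CGM $\mathcal{G}$ over $\mathcal{V} = \{S, \pi, A, S'\}$. Throughout, I would treat the $A$ inside the KL divergence as a generic action value in the support of $\pi(\cdot \mid S)$, and the $A$ in ``there is an edge from $A$ to $S'$'' as the corresponding node in $\mathcal{G}$.

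I would handle the easier direction first: if $A$ is \emph{not} a cause of $S'$ given $S$, show $C^{\pi}(A \mid S \to S') = 0$. By Definition~\ref{definition:causal-action}, for any two policies $\pi_1, \pi_2$, the post-intervention distributions $P(S' \mid S, \mathrm{do}(A := \pi_1(A \mid S)))$ and $P(S' \mid S, \mathrm{do}(A := \pi_2(A \mid S)))$ coincide. Since $A$ is not a parent of $S'$ in $\mathcal{G}$, the causal Markov condition in the DAG gives $S' \perp\!\!\!\perp A \mid S$, so $P(S' \mid S, A) = P(S' \mid S)$ for every action value $A$. Averaging over $\pi(A \mid S)$ preserves this equality, so $P^{\pi}(S' \mid S) = P(S' \mid S) = P(S' \mid S, A)$, and the KL divergence in Eq.~(\ref{eq:definition_of_causality}) vanishes.

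For the forward direction, suppose there is an edge $A \to S'$. By Definition~\ref{definition:causal-action} there exist $\pi_1 \neq \pi_2$ with differing intervention distributions, which forces $P(S' \mid S, a)$ to depend non-trivially on $a$ on some subset of the action space. Writing $P^{\pi}(S' \mid S) = \sum_{a \in \mathcal{A}} \pi(a \mid S)\, P(S' \mid S, a)$ as a convex combination, if we had $P(S' \mid S, a) = P^{\pi}(S' \mid S)$ for every $a$ in the support of $\pi(\cdot \mid S)$, then all those conditional distributions would be equal, contradicting the non-trivial dependence just established. Hence there exists $a^{\star}$ in the support of $\pi(\cdot \mid S)$ with $P(S' \mid S, a^{\star}) \neq P^{\pi}(S' \mid S)$, and strict positivity of the KL divergence on distinct distributions gives $C^{\pi}(a^{\star} \mid S \to S') > 0$.

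The main obstacle I anticipate is the slight notational ambiguity in the theorem, which uses $A$ simultaneously as a node of $\mathcal{G}$ and as the action value argument of $C^{\pi}$. I would address this explicitly by phrasing the reverse implication as a \textbf{universal} statement (for every $A$ in the support, $C^{\pi} = 0$) and the forward implication as an \textbf{existential} statement (some $A$ in the support yields $C^{\pi} > 0$), which is the only reading consistent with how the quantity is later used to threshold the approximate causal action space in Definition~\ref{definition:approximate-calsal-effects}. A second minor care point is to justify invoking the causal Markov condition for a CGM that includes the policy node $\pi$; I would note that the directed path structure $\pi \to A$, $S \to A$, $S \to S'$ (and, only when the edge exists, $A \to S'$) is acyclic, so $\pi$ and $A$ are non-descendants of $S'$ in the non-causal case and the standard d-separation argument applies.
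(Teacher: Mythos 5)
Your overall strategy is the same as the paper's: both directions reduce to the standard fact that $D_{KL}(P\|Q)=0$ iff $P=Q$, combined with the observation that non-causality in the sense of Definition~\ref{definition:causal-action} forces $P(S'|S,A)$ to coincide with $P^{\pi}(S'|S)$. The paper argues directly from the interventional definition (all $\text{do}$-distributions coincide, hence $P(S'|S,A)=P^{\pi}(S'|S)$ for every $\pi$, hence the log-ratio vanishes; and conversely), whereas you route the backward direction through the causal Markov condition and d-separation, and you are more explicit than the paper about the quantifier structure (universal over actions for the zero case, existential for the positive case). That extra care is welcome, since the paper silently conflates the node $A$ with a particular action value.

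However, your forward direction has a concrete gap as written. From the existence of $\pi_1\neq\pi_2$ with differing intervention distributions you correctly conclude that $a\mapsto P(S'|S,a)$ is non-constant on \emph{some} subset of $\mathcal{A}$ --- but that subset need not intersect the support of $\pi(\cdot|S)$. If $P(S'|S,a)$ is constant on the support of $\pi(\cdot|S)$ and varies only on actions to which $\pi$ assigns zero probability, then $P^{\pi}(S'|S)$ equals that common value, every $a$ in the support has $C^{\pi}(a|S\to S')=0$, and yet the edge $A\to S'$ exists (witnessed by policies concentrated on the remaining actions). So ``all conditionals on the support are equal'' does not contradict ``non-trivial dependence somewhere,'' and your claimed $a^{\star}$ need not exist inside the support. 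The fix is simply to drop the restriction to the support: $D_{KL}\left(P(S'|S,a)\,\|\,P^{\pi}(S'|S)\right)$ is well defined (possibly infinite) for any $a\in\mathcal{A}$, and if it vanished for \emph{all} $a\in\mathcal{A}$ then every conditional would equal $P^{\pi}(S'|S)$, every $\text{do}$-distribution would be a convex combination of identical distributions, and $A$ could not be a cause. This unrestricted reading is also the one the paper implicitly uses when it thresholds $C^{\pi}(A_i|S\to S')$ over the whole action space in Definition~\ref{definition:approximate-calsal-effects}.
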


\begin{proof}

According to the definition of causal effects in Eq.(\ref{eq:definition_of_causality}),
\begin{align}
    C^{\pi}(A|S \rightarrow S') &= D_{KL}(P(S'|S, A) \| P^{\pi}(S'|S)) \nonumber  \\
    &= \Sigma_{S' \in \mathcal{S}}{P(S'|S, A)\log{\frac{P(S'|S, A)}{P^{\pi}(S'|S)}}}
\end{align}
On the basis of Definition~\ref{definition:causal-action}, if $A$ is not a causal action of $S'$ in $S$, then $\forall \pi_1, \pi_2$, and $\pi_1 \neq \pi_2$, we have
\begin{equation}
    P(S'|S, \text{do}(A:=\pi_1(A|S))) \equiv P(S'|S, \text{do}(A:=\pi_2(A|S))).
\end{equation}
Here, $\pi_1$ and $\pi_2$ represent two different policies, and $\pi_1 \neq \pi_2$ ensures that the actions are sampled under different policies to distinguish causal effects.
That means, if $A$ is not a causal action, then $P(S'|S, A) = P^{\pi}(S'|S), \forall \pi$. As a result, 

\begin{equation}
	\log{\frac{P(S'|S, A)}{P^{\pi}(S'|S)}} = 0 \Rightarrow C^{\pi}(A|S \rightarrow S') = 0.
\end{equation}

On the other hand, $C^{\pi}(A|S \rightarrow S') = 0 \iff D_{KL}(P(S'|S, A) \| P^{\pi}(S'|S))=0 \iff P(S'|S, A) \equiv P^{\pi}(S'|S)$. Then, $\forall \pi_1, \pi_2$, and $\pi_1 \neq \pi_2$, $P(S'|S, \text{do}(A:= \pi_1(A|S))) \equiv P(S'|S, \text{do}(A:= \pi_2(A|S))) \equiv P^{\pi}(S'|S)$. That precisely matches the Definition~\ref{definition:causal-action}. Hence, action $A$ causes no effects on $S'$ in state $S$. Intuitively, if $P(S'|S, A)$ equals $P^\pi(S'|S)$ for all policies $\pi$, this implies that the action $A$ does not alter the distribution of $S'$, hence it has no causal effect.

In conclusion, since KL divergence is non-negative, action $A$ is a causal action from state $S$ to $S'$ if and only if $C^{\pi}(A|S \rightarrow S') > 0$. Inversely, action $A$ causes no effect on state $S'$ with state $S$ if and only if $C^{\pi}(A|S \rightarrow S') = 0$. That completes the proof.

\end{proof}

\section*{Proof of Lemma~\ref{lemma:causal_action_effect_estimation}}

\setcounter{lemma}{0}

\begin{lemma}
    Define the N-value for $A_i, A_j \in \mathcal{A}$ as
    \begin{equation}
        N(S, A_i, A_j) = \mathbb{E}_{S' \sim P(\cdot|S, A_i)}{\left[ \log{\frac{P^{\pi}(A_j|S, S')}{\pi(A_j|S)}} \right]},
    \label{definition:N-value}
    \end{equation}
    then, the calculation in Eq.(\ref{eq:definition_of_causality}) can be expressed as
    \begin{equation}
        C^{\pi}(A | S \rightarrow S') = N(S, A, A).
        \label{eq:causal-effects-estimation-via-N-values}
    \end{equation}
\end{lemma}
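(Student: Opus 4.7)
The plan is to start from the definition of the KL divergence and rewrite the log-ratio of state distributions as a log-ratio of action distributions via Bayes' rule. Concretely, I would first expand
\begin{equation}
C^{\pi}(A\mid S \rightarrow S') = \mathbb{E}_{S' \sim P(\cdot\mid S, A)}\!\left[\log\frac{P(S'\mid S, A)}{P^{\pi}(S'\mid S)}\right],
\end{equation}
so that the target identity reduces to showing the pointwise equality
\[
\frac{P(S'\mid S, A)}{P^{\pi}(S'\mid S)} \;=\; \frac{P^{\pi}(A\mid S, S')}{\pi(A\mid S)}
\]
for every $S'$ in the support of $P(\cdot\mid S, A)$.

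Next, I would invoke Bayes' rule on the joint distribution over $(A, S')$ conditional on $S$ induced by the policy $\pi$ and the environment dynamics. Under the causal graphical model of Figure~\ref{fig:causal_graph}, once $A$ is fixed, the next state $S'$ is governed purely by the transition kernel and does not depend further on $\pi$; that is, $P^{\pi}(S'\mid S, A) = P(S'\mid S, A)$. Combined with the policy marginal $\pi(A\mid S)$, Bayes' rule gives
\begin{equation}
P^{\pi}(A \mid S, S') = \frac{P(S'\mid S, A)\,\pi(A\mid S)}{P^{\pi}(S'\mid S)},
\end{equation}
which rearranges immediately to the pointwise equality above. Substituting back into the expectation yields $C^{\pi}(A\mid S \rightarrow S') = N(S, A, A)$, completing the proof.

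The calculation is essentially one line once the right manipulation is identified, so there is no serious technical obstacle. The only subtle point to flag explicitly is the conditional independence $P^{\pi}(S'\mid S, A) = P(S'\mid S, A)$, which I would justify by pointing at the CGM: $\pi$ is a parent of $A$ but not of $S'$, so conditioning on $A$ d-separates $\pi$ from $S'$ given $S$. Handling this step carefully is what makes the inverse-dynamics model $P^{\pi}(A\mid S, S')$ a legitimate substitute for the forward dynamics in the KL, which is the whole computational point of the lemma.
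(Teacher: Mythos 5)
Your proposal is correct and follows essentially the same route as the paper's proof: expand the KL divergence as an expectation over $S' \sim P(\cdot|S,A)$ and apply Bayes' rule to convert the forward-dynamics ratio $P(S'|S,A)/P^{\pi}(S'|S)$ into the inverse-dynamics ratio $P^{\pi}(A|S,S')/\pi(A|S)$. Your explicit justification of $P^{\pi}(S'|S,A)=P(S'|S,A)$ via the causal graph is a point the paper leaves implicit, but it does not change the argument.
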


\begin{proof}
According to Eq.(\ref{eq:definition_of_causality}), 
\begin{align}
    C^{\pi}(A | S \rightarrow S') &= D_{KL}\left(P(S'|S, A) \| P^{\pi}(S'|S)\right) \nonumber \\
        &= \mathbb{E}_{S' \sim P(\cdot|S, A)}\left[\log{ \frac{P(S'|S, A)}{P^{\pi}(S'|S)}} \right].
\end{align}

Using Bayesian rules, we have

\begin{align}
    P(S'|S, A) &= \frac{P^{\pi}(S, A, S')}{P^{\pi}(S, A)} \\
    &= \frac{P^{\pi}(A | S, S') \times P^{\pi}(S, S')}{P^{\pi}(S, A)},
\label{proof:causality_0}
\end{align}
and
\begin{equation}
    P^{\pi}(S'|S) = \frac{P^{\pi}(S, S')}{P(S)}.
\label{proof:causality_1}
\end{equation}
Then, it can be derived that 
\begin{align}
    \frac{P(S'|S, A)}{P^{\pi}(S'|S)} = \frac{P^{\pi}(A | S, S')}{P^{\pi}(S, A)/P(S)} = \frac{P^{\pi}(A | S, S')}{\pi(A|S)}.
\end{align}
Hence, we have
\begin{equation}
    C^{\pi}(A | S \rightarrow S') = \mathbb{E}_{S' \sim P(\cdot|S, A)}\left[ \log{\frac{P^{\pi}(A | S, S')}{\pi(A|S)}} \right],
\end{equation}
which aligns with the definition of $N$-value in Eq.(\ref{definition:N-value}) when $A_i=A_j=A$, and concludes the proof.

\end{proof}

\section*{Algorithm}

Algorithm~\ref{algorithm:cee} outlines our proposed Causal Effect Estimation (CEE) approach for Deep Reinforcement Learning. The method takes as input an initial policy $\pi_{\theta}$, an inverse dynamics model $P_{\psi}^{\mathrm{inv}}$, and an N-value network $N_{\phi}$, each with corresponding initial parameters. After specifying the hyper-parameters $\epsilon$, $\tau$, $T$, and the update interval $K$ for the N-value network, the training proceeds in two phases.

In Phase 1, we pre-train both the inverse dynamics model and the N-value network, following the framework established in \cite{Zhong_Yang_Zhao_2024}. We repeatedly roll out trajectories using the current policy to collect experience into a replay buffer $\mathcal{D}$. From $\mathcal{D}$, samples are drawn to update the inverse dynamics model parameters $\psi$. Every $K$ iterations, we also update the N-value network parameters $\phi$. When the task is deemed sufficiently complex, we incorporate curiosity-driven rewards to refine the policy parameters. 

Moving on to Phase 2, we optimize the policy by estimating causal effects and reducing the action space accordingly. Concretely, at each iteration, we first compute a similarity factor matrix $M(s_t,\cdot,\cdot)$ using the N-value network. We then cluster the available actions into groups, within which we measure causal effects based on Equation (\ref{eq:causal-effects-estimation-via-N-values}). After normalizing these causal effects (Equation (\ref{eq:definition_of_relative_causality})), we identify a minimal causal action subset $\mathcal{A}^{\text{min}}_{S \to S', \tau}$ (Equations (\ref{definition:minimal-calsal-effects-eq1}) and (\ref{definition:minimal-calsal-effects-eq2})) and derive a masking vector $\boldsymbol{m}$. An action $a_m$ is sampled from the resulting masked policy $\pi_{\theta}^m$, executed in the environment, and the experience is stored back into $\mathcal{D}$. Finally, we update $\pi_{\theta}$ using standard reinforcement learning methods. The algorithm returns the optimized policy parameters $\theta$ and the updated N-value network parameters $\phi$.

\begin{algorithm}
\caption{Causal Effect Estimation (CEE) for Deep Reinforcement Learning}

\begin{algorithmic}[1]
\STATE \textbf{Inputs:} Initialize policy $\pi_\theta$, inverse dynamics model $P^{\text{inv}}_\psi$, N-value network $N_\phi$ with $\theta_0, \psi_0, \phi_0$.
\STATE \textbf{Hyper-Parameter:} Set the hyper-parameter $\epsilon$, $\tau$, $T$, and N-value network update interval $K$.

\STATE \textbf{Phase 1: Pre-train Inverse Dynamics Model and N-value Network}
\FOR{iteration $i = 0, 1, 2, \dots$}
    \STATE Rollout with Policy (random initialized) and store data into buffer $\mathcal{D}$
    \STATE Sample transitions from buffer $\mathcal{D}$
    \STATE Update inverse dynamics model parameters $\psi$
    \IF{$i$ \% $K == 0$}
        \STATE Update N-value network parameters $\phi$
    \ENDIF
    \IF{task is complex}
        \STATE Update policy parameters based on the curiosity reward
    \ENDIF
\ENDFOR

\STATE \textbf{Phase 2: Optimize Policy based on the Causal Effect Estimation}
\FOR{$t = 0, 1, 2, \dots$}
    \STATE Evaluate the similarity factor matrix $M(s_t, \cdot, \cdot)$ using N-value network $N_\phi$
    \STATE Cluster actions into several groups $\cup_{k}{\mathcal{A}^{k}_{S, \epsilon}} = \mathcal{A}$ based on Equation (\ref{eq:action-cluster}) and (\ref{eq:action-cluster-constraints})
    \STATE Evaluate the causal effects of actions $C^{\pi}(A | S \rightarrow S')$ in each group based on Equation (\ref{eq:causal-effects-estimation-via-N-values})
    \STATE Normalize the causal effects and get the relative causal effects based on Equation (\ref{eq:definition_of_relative_causality})
    \STATE Build the minimal causal action space $\mathcal{A}^{\text{min}}_{S \to S', \tau} \subseteq \mathcal{A}$ based on Equation (\ref{definition:minimal-calsal-effects-eq1}) and (\ref{definition:minimal-calsal-effects-eq2})
    \STATE Get the masking vector $\boldsymbol{m}=[m_1, \dots, m_N]$ based on Equation (\ref{eq:masking-approximal-causal-action-space})
    \STATE Sample an action $a_m \in \mathcal{A}^{\text{min}}_{S \to S', \tau}$ from the masked policy $\pi^{m}_{\theta}$ that is defined in Equation (\ref{equation:masked_policy})
    \STATE Execute action $a_m$ and store data into buffer $\mathcal{D}$
    \STATE Optimize $\pi_\theta$ with experiences in buffer $\mathcal{D}$ using reinforcement learning algorithm
\ENDFOR
\STATE \textbf{Return:} Policy parameters $\theta$ and N-value network parameters $\phi$
\end{algorithmic}
\label{algorithm:cee}
\end{algorithm}

\newpage

\section*{MiniGrid Environment}

MiniGrid is a lightweight, grid-based environment specifically designed for evaluating DRL algorithms. In our experiments, we selected eight diverse tasks to validate the efficiency of our method: PutNextLocal, Unlock Pickup, Maze 4 Rooms: Go to Green Ball, Maze 4 Rooms: Go to Green Key, Maze 4 Rooms: Go to Red Ball, Maze 9 Rooms: Go to Blue Key, Maze 9 Rooms: Go to Green Box, and Maze 9 Rooms: Go to Grey Key. These tasks present varying levels of complexity and action space challenges, making them ideal benchmarks for assessing the effectiveness of our approach. The layouts of the maps for these eight tasks are illustrated in Figure~\ref{fig:MiniGrid_environments}. For all tasks, a reward of `$1 - 0.9 * (\text{step\_count} / \text{max\_steps})$' is given for success, and `0' for failure.

\begin{figure*}[h]
\centering
\subfigure[]{
\includegraphics[height=0.22\columnwidth]{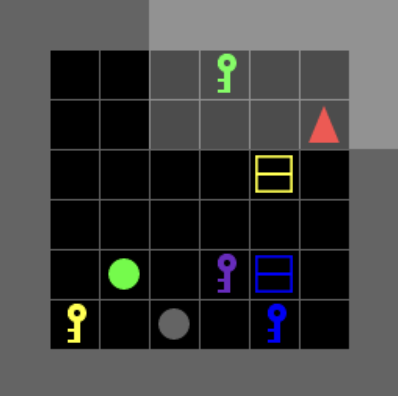}
\label{fig:MiniGrid_environments:PutNextLocal}
}
\hspace{0.05\linewidth}
%\hfill
\subfigure[]{
\includegraphics[height=0.22\columnwidth]{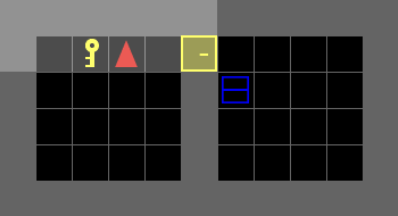}
\label{fig:MiniGrid_environments:UnlockPickUp}
}
\hspace{0.05\linewidth}
%\hfill
\subfigure[]{
\includegraphics[height=0.25\columnwidth]{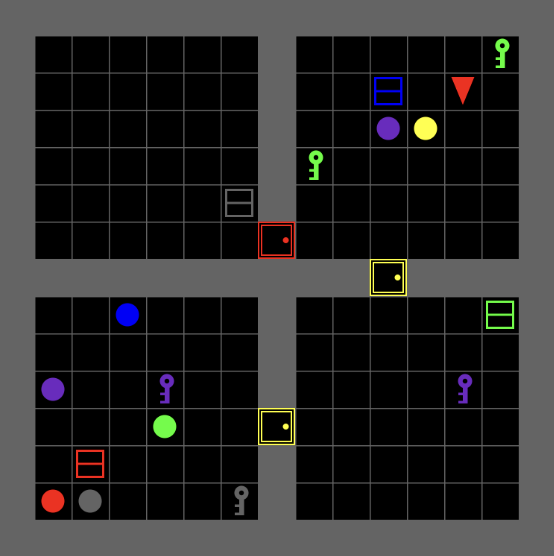}
\label{fig:MiniGrid_environments:Maze4Rooms}
}
\hspace{0.05\linewidth}
%\hfill
\subfigure[]{
\includegraphics[height=0.25\columnwidth]{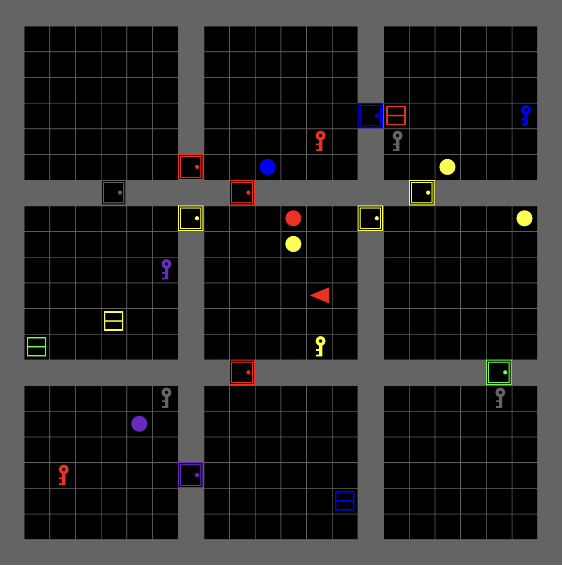}
\label{fig:MiniGrid_environments:Maze9Rooms}
}
\caption{Illustrations of MiniGrid environment: (a) PutNextLocal; (b) Unlock Pickup; (c) Maze 4 Rooms; (d) Maze 9 Rooms.}
\label{fig:MiniGrid_environments}
\end{figure*}

\subsection*{Task Description}

\noindent
\textbf{PutNextLocal:}
In the PutNextLocal task, the agent's objective is to place an object of a specified color and type adjacent to another object of a different specified color and type within a single room with $6 \times 6$ grids (see Figure~\ref{fig:MiniGrid_environments:PutNextLocal}). The objects can be one of five colors: ``red", ``green", ``purple", ``yellow", or ``grey", and one of three types: ``ball", ``box", or ``key". 

\noindent
\textbf{Unlock Pickup:}
In the Unlock Pickup task, the agent must pick up a box placed in a locked room that can only be accessed using a key (see Figure~\ref{fig:MiniGrid_environments:UnlockPickUp}). To complete the task, the agent first needs to locate and pick up the key to unlock the door, granting access to the other room. Once inside, the agent explores the opened room, finds the box, and finally picks it up. This task is particularly challenging due to its complexity, as it consists of multiple subtasks, leading to sparser rewards.

\noindent
\textbf{Maze 4 Rooms:}
We selected three different tasks in the map of ``Maze 4 Rooms": ``Go to Green Ball", ``Go to Grey Key", and ``Go to Red Ball" (see Figure~\ref{fig:MiniGrid_environments:Maze4Rooms}). These three tasks are highly challenging as the agent must navigate from the one room to another room while sequentially finding and passing through the doors to adjacent rooms. The agent continues this process of entering rooms, searching for the target, or moving to the next room until it reaches the target location, thereby completing the task. However, the agent may repeatedly enter the wrong rooms, leading to a failure to locate the target. This demands a high level of exploration efficiency for the agent. Without suppressing redundant actions, the exploration policy is hard to find the target within limited steps.

\noindent
\textbf{Maze 9 Rooms:}
We also selected another three different tasks in the map of ``Maze 9 Rooms": ``Go to Blue Key", ``Go to Green Box", and ``Go to Grey Key" (see Figure~\ref{fig:MiniGrid_environments:Maze9Rooms}). These three tasks are similar to those in the ``Maze 4 Rooms" map but are potentially more challenging. With an increased number of rooms, the agent faces a significantly larger exploration space when the target's location is far from the agent's starting position, resulting in even sparser rewards. In this scenario, a reduced action space becomes crucial for the agent to efficiently navigate to the target. During the search process, the agent only needs to focus on movement-related actions, such as ``left", ``right", and ``forward", while other actions can be ignored. Repeatedly attempting irrelevant actions not only fails to contribute to task completion but also increases the number of ineffective trials, making it difficult for the experience replay buffer to collect useful data for policy training.

\subsection*{Additional Results}

Table~\ref{table:testing_results_minigrid} shows the testing results of trained policies. Our method, CEE, consistently achieves superior performance or matches the best-performing baselines across all tasks. Notably, in tasks such as ``PutNextLocal", ``Unlock Pickup", and the ``Maze 9 Rooms" scenarios, CEE demonstrates the highest or comparable rewards (e.g., 0.85, 0.90, and up to 0.96), showcasing its robustness and efficiency in diverse and complex environments.

The primary difference between CEE-WOC and CEE is the absence of the action classification mechanism in CEE-WOC. A comparison between CEE-WOC and PPO highlights the significant advantage of incorporating the causal effect estimation mechanism, demonstrating its utility in filtering redundant actions. However, when comparing CEE-WOC with CEE, it becomes evident that the action classification mechanism is crucial for further improving performance. These results suggest that relying solely on causal effect estimation to filter redundant actions has limited effectiveness.

Additionally, while NPM and NPM-Random perform well in certain tasks, such as ``Maze 4 Rooms: Go to Green Ball" and ``Maze 9 Rooms: Go to Blue Key", their performance is inconsistent in environments with greater complexity or sparser rewards. A comparison between NPM and NPM-Random reveals that selecting actions within each cluster plays an important role in filtering out redundant actions, as NPM-Random outperforms NPM in most cases. Furthermore, the comparison between CEE and NPM-Random indicates that selecting actions based on causal effects achieves even better performance, further demonstrating the advantages of our method.

Overall, the results underscore that CEE effectively addresses the challenges posed by tasks with redundant action spaces and sparse rewards, delivering significant improvements in exploration efficiency and overall performance compared to the baselines.

\renewcommand{\arraystretch}{1.5}
\begin{table}[h]
\centering
\caption{The testing results on MiniGrid environments.}
\begin{tabular}{l|rl|rl|rl|rl|rl}
\toprule
Task 			 & \multicolumn{2}{c|}{CEE (Ours)} & \multicolumn{2}{c|}{CEE-WOC} & \multicolumn{2}{c|}{NPM} & \multicolumn{2}{c|}{NPM-Random} & \multicolumn{2}{c}{PPO} \\
\midrule
PutNextLocal 	 & \textbf{0.85} & \textbf{$\pm$0.0} & 0.84 & $\pm$0.0 & 0.83 & $\pm$0.0 & \textbf{0.85} & \textbf{$\pm$0.0} & 0.80 & $\pm$0.03 \\
Unlock Pickup & \textbf{0.90} & \textbf{$\pm$0.0} & 0.88 & $\pm$0.0 & \textbf{0.90} & \textbf{$\pm$0.0} & 0.89 & $\pm$0.00 & 0.85 & $\pm$0.01 \\
Maze 4 Rooms: Go to Green Ball & \textbf{0.89} & \textbf{$\pm$0.0} & 0.21 & $\pm$0.36 & \textbf{0.89} & \textbf{$\pm$0.0} & \textbf{0.89} & \textbf{$\pm$0.0} & 0.18 & $\pm$0.34 \\
Maze 4 Rooms: Go to Grey Key & \textbf{0.89} & \textbf{$\pm$0.0} & 0.0 & $\pm$0.0 & 0.88 & $\pm$0.0 & \textbf{0.89} & \textbf{$\pm$0.0} & 0.01 & $\pm$0.0 \\
Maze 4 Rooms: Go to Red Ball & \textbf{0.83} & \textbf{$\pm$0.0} & 0.11 & $\pm$0.01 & 0.80 & $\pm$0.03 & \textbf{0.83} & \textbf{$\pm$0.0} & 0.11 & $\pm$0.01 \\
Maze 9 Rooms: Go to Blue Key & \textbf{0.95} & \textbf{$\pm$0.0} & 0.94 & $\pm$0.0 & \textbf{0.95} & \textbf{$\pm$0.0} & \textbf{0.95} & \textbf{$\pm$0.0} & 0.78 & $\pm$0.26 \\
Maze 9 Rooms: Go to Green Box & \textbf{0.95} & \textbf{$\pm$0.0} & 0.94 & $\pm$0.0 & 0.94 & $\pm$0.0 & \textbf{0.95} & \textbf{$\pm$0.0} & 0.94 & $\pm$0.0 \\
Maze 9 Rooms: Go to Grey Key & \textbf{0.96} & \textbf{$\pm$0.0} & \textbf{0.96} & \textbf{$\pm$0.0} & 0.96 & $\pm$0.01 & \textbf{0.96} & \textbf{$\pm$0.0} & \textbf{0.96} & \textbf{$\pm$0.0} \\
\bottomrule
\end{tabular}
\label{table:testing_results_minigrid}
\end{table}

\section*{Atari 2600 Benchmark}

The Atari 2600 environment~\cite{bellemare13arcade}, known for its complexity, offers a diverse set of tasks and scenarios, making it a widely used benchmark for testing DRL algorithms in decision-making problems with raw image inputs. In our experiments, to evaluate the performance of different algorithms in the presence of redundant actions, we enabled the ``full\_action\_space" mode. Under this setting, the agent has access to 18 possible actions in each scenario, as shown in Table~\ref{table:full_action_space_Atari}. However, not all actions are useful for completing the tasks in practice. This setting increases the learning difficulty, as the agent has to explore with redundant actions during training. Effectively identifying and filtering out these redundant actions based on observational data is important for reducing the action space and improving training performance. Below, we provide a detailed description of the four scenarios used in our experiments.

\begin{table}[h]\scriptsize
\centering
\caption{The full action space in Atari 2600 tasks.}
\begin{tabular}{l|lllllllll}
\toprule
Action value & 0 	& 1 	& 2 & 3 & 4 & 5 & 6 & 7 & 8 \\
\midrule
Meaning 	 & Noop	& Fire	& Up	 & Right & Left & Down & UpRight & UpLeft & DownRight \\
\toprule
Action value & 9 	& 10 & 11 & 12 & 13 & 14 & 15 & 16 & 17 \\
\midrule
Meaning 	 & DownLeft	& UpFire	& RightFire	 & LeftFire & DownFire & UpRightFire & UpLeftFire & DownRightFire & DownLeftFire \\
\bottomrule
\end{tabular}
\label{table:full_action_space_Atari}
\end{table}

\noindent
\textbf{Assault:} In this task, the agent controls a vehicle that can move sideways while a large mothership circles overhead, continuously deploying smaller drones (see Figure~\ref{fig:Atari_environments:Assault}). The agent's objective is to destroy these enemies while evading their attacks. By default, the agent has access to a subset of 7 actions: \{Noop, Fire, Up, Right, Left, RightFire, LeftFire\}. To assess the algorithms' ability to identify and filter redundant actions, we enabled the full action space (the same below).

\noindent
\textbf{BeamRider:} The agent operates a spaceship traveling forward at a constant speed, which can be steered sideways across discrete positions (see Figure~\ref{fig:Atari_environments:BeamRider}). The goal is to destroy enemy ships, avoid their attacks, and dodge space debris. By default, the agent has access to a subset of 9 actions: \{Noop, Fire, Up, Right, Left, UpRight, UpLeft, RightFire, LeftFire\}.

\noindent
\textbf{SpaceInvaders:} The agent's objective is to eliminate space invaders by firing a laser cannon before they reach Earth (see Figure~\ref{fig:Atari_environments:SpaceInvaders}). The game ends if the agent loses all lives due to enemy fire or if the invaders successfully reach Earth. By default, the agent can use a subset of 6 actions: \{Noop, Fire, Right, Left, RightFire, LeftFire\}.

\noindent
\textbf{TimePilot:} In this task, the agent pilots an aircraft tasked with destroying enemy units (see Figure~\ref{fig:Atari_environments:TimePilot}). As the game progresses, the agent encounters increasingly advanced enemies equipped with futuristic technology. By default, the agent has access to a subset of 10 actions: \{Noop, Fire, Up, Right, Left, Down, UpFire, RightFire, LeftFire, DownFire\}.

\begin{figure*}[h]
\centering
\subfigure[]{
\includegraphics[height=0.24\columnwidth]{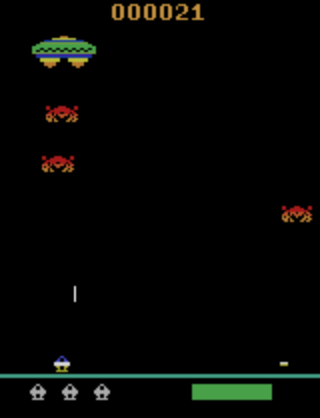}
\label{fig:Atari_environments:Assault}}
\hspace{0.05\linewidth}
%\hfill
\subfigure[]{
\includegraphics[height=0.24\columnwidth]{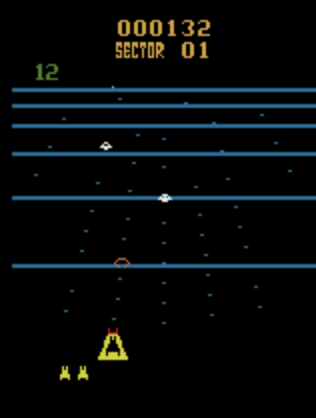}
\label{fig:Atari_environments:BeamRider}}
\hspace{0.05\linewidth}
%\hfill
\subfigure[]{
\includegraphics[height=0.24\columnwidth]{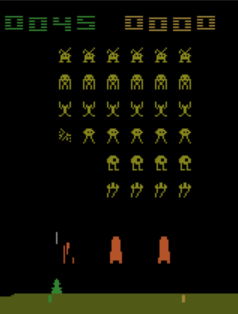}
\label{fig:Atari_environments:SpaceInvaders}}
\hspace{0.05\linewidth}
%\hfill
\subfigure[]{
\includegraphics[height=0.24\columnwidth]{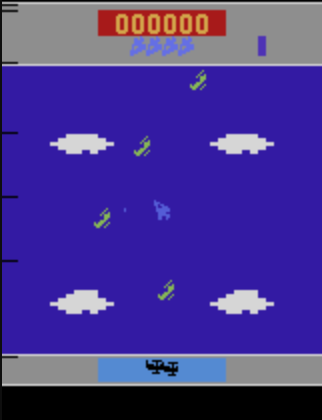}
\label{fig:Atari_environments:TimePilot}}
\caption{Part of the tasks in Atari 2600 environment: (a) Assault; (b) BeamRider; (c) SpaceInvaders; (d) TimePilot.}
\label{fig:Atari_environments}
\end{figure*}

\subsection*{Additional Results}

Table~\ref{table:additional_results_Atari} summarizes the testing performance of the trained policies across four Atari environments: Assault, BeamRider, SpaceInvaders, and TimePilot. The results demonstrate that our proposed method, CEE, consistently achieves superior performance compared to the baseline algorithms, NPM and PPO, across all tasks.

\begin{table}[h]
\centering
\caption{The testing results on Atari 2600 tasks.}
\begin{tabular}{l|rl|rl|rl}
\toprule
Task 			 & \multicolumn{2}{c|}{CEE (Ours)} & \multicolumn{2}{c|}{NPM} & \multicolumn{2}{c}{PPO} \\
\midrule
ALE/Assault-v5 	 & \textbf{3930.69} & \textbf{$\pm$195.21} & 3716.83 & $\pm$273.09 & 3695.49 & $\pm$70.14 \\
ALE/BeamRider-v5 & \textbf{4763.39} & \textbf{$\pm$212.23} & 4427.76 & $\pm$230.18 & 4390.25 & $\pm$183.86\\
ALE/SpaceInvaders-v5 & \textbf{914.97} & \textbf{$\pm$59.98} & 864.19 & $\pm$38.18 & 704.55 & $\pm$17.87\\
ALE/TimePilot-v5 & \textbf{6952.40} & \textbf{$\pm$260.72} & 6473.00 & $\pm$270.64 & 4987.00 & $\pm$458.77\\
\bottomrule
\end{tabular}
\label{table:additional_results_Atari}
\end{table}

In environments like Assault and BeamRider, which require quick reactions and efficient handling of dynamic scenarios, CEE outperforms the baselines by effectively leveraging causal effect estimation to filter redundant actions. This allows the agent to focus on meaningful actions, resulting in better decision-making and adaptability to rapidly changing game states.

In more challenging environments, such as SpaceInvaders and TimePilot, where the reward signals are sparse or the tasks involve complex interactions with diverse enemy behaviors, CEE demonstrates clear advantages. Its ability to identify and prioritize impactful actions contributes to more efficient exploration and robust performance, even under difficult conditions.

Overall, these results highlight the effectiveness of CEE in addressing the challenges posed by redundant action spaces and sparse rewards. By integrating causal effect estimation into the learning process, CEE enhances both exploration efficiency and policy robustness, making it a reliable approach for complex decision-making tasks.

\section*{Experiment Settings}

All experiments in this paper were conducted on a desktop computer equipped with an NVIDIA GeForce RTX 4090 GPU and an AMD Ryzen Threadripper PRO 5975WX CPU. The neural networks were implemented using PyTorch. 

\subsection*{Neural Network Architecture}

In this paper, all algorithms are implemented based on PPO. Therefore, the structure of the neural network setting remains the same for all experimental tasks. Below, we outline the neural network architectures used for training in our experiments. We use two hidden layers with the ReLU activation function and the size of each layer is 64 for both actor and critic networks. The input to the neural network is a 3-channel image, processed by three convolutional layers with ReLU activation. The first layer applies 32 filters of size 8×8 with a stride of 4. The second layer uses 64 filters of size 4×4 with a stride of 2. The third layer applies 64 filters of size 3×3 with a stride of 1. The output from these convolutional layers is then flattened into a 1-dimensional tensor and passed through a fully connected linear layer with 512 output units, followed by a ReLU activation.

\subsection*{Training Settings}

The hyper-parameters of our method are set as follows: Both of the learning rates of actor and critic networks are set as 0.0003. The model is trained using a batch of data with a size of 64 and a replay buffer with a size of 50000. The discount factor $\gamma=0.99$. The Adam optimizer is chosen to update the neural networks. We also apply the GAE trick with $\lambda=0.95$. In our experiment, our relative causal effects of actions threshold is set as $\tau=0.8$. A detailed summary of the hyper-parameter settings can be found in Table~\ref{table:hyper-parameters_settings}

\renewcommand{\arraystretch}{1.2}
\begin{table}[h]
\centering
\caption{Hyper-parameters settings.}
\begin{tabular}{l|l|l}
\toprule
Hyperparameter & Maze and MiniGrid & Atari \\
\midrule
\textbf{Phase 1:} & \\
Learning rate (N-value network) & 0.0003 & 0.00025 \\
Number of steps & 2048 & 128 \\
Update interval for N-value network ($K$) & 1 & 1 \\
Batch size & 64 & 256 \\
Number of epochs & 10 & 4\\
Discount factor ($\gamma$) & 0.99 & 0.99 \\
GAE factor ($\lambda$) & 0.95 & 0.95 \\
Value clip & 0.2 & 0.1 \\
Value coefficient & 0.5 & 0.5 \\
Entropy coefficient & 0.2 & 0.01 \\
Buffer size & 50000 & 100000 \\
\midrule
\textbf{Phase 2:} &	\\
Learning rate for policy network & 0.0003 & 0.00025 \\
Number of steps & 2048 & 128 \\
Batch size & 64 & 256 \\
Number of epochs & 10 & 4 \\
Discount factor ($\gamma$) & 0.99 & 0.99 \\
GAE factor ($\lambda$) & 0.95 & 0.95 \\
Value clip range & 0.2 & 0.1 \\
Value coefficient & 0.5 & 0.5 \\
Entropy coefficient & 0.2 & 0.01 \\
Masked threshold ($\epsilon$) & 0.5 & 0.5 \\
Threshold for causal effects ($\tau$) & 0.8 & 0.8 \\
Temperature for softmax ($T$) & 1.0 & 1.0\\
\bottomrule
\end{tabular}
\label{table:hyper-parameters_settings}
\end{table}

\end{document}